\def\eqref#1{equation~\ref{#1}}
\def\1{\bm{1}}
\def\rvf{{\mathbf{f}}}
\def\rvu{{\mathbf{i}}}
\def\rvu{{\mathbf{u}}}
\def\rvx{{\mathbf{x}}}
\def\rvz{{\mathbf{z}}}
\def\vu{{\bm{u}}}
\def\vx{{\bm{x}}}
\DeclareMathAlphabet{\mathsfit}{\encodingdefault}{\sfdefault}{m}{sl}
\SetMathAlphabet{\mathsfit}{bold}{\encodingdefault}{\sfdefault}{bx}{n}
\def\gN{{\mathcal{N}}}
\newcommand{\E}{\mathbb{E}}
\newcommand{\R}{\mathbb{R}}
\newtheorem{theorem}{\textbf{Theorem}}
\newtheorem*{theorem*}{\textbf{Theorem}}
\newtheorem{appendix_theorem}{\textbf{Theorem}}
\newtheorem{definition}{\textbf{Definition}}
\newtheorem{lemma}{\textbf{Lemma}}
\title{Tight Second-Order Certificates\\ for Randomized Smoothing}
\author{Alexander Levine, Aounon Kumar, Thomas Goldstein, \& Soheil Feizi\\ Department of Computer Science\\
University of Maryland\\
College Park, MD 20742, USA \\
\texttt{alevine0@cs.umd.edu} \\
}
\begin{document}

\maketitle

\begin{abstract}

Randomized smoothing is a popular way of providing robustness guarantees against adversarial attacks: randomly-smoothed functions have a universal Lipschitz-like bound, allowing for robustness certificates to be easily computed. In this work, we show that there also exists a universal curvature-like bound for Gaussian random smoothing: given the exact value and \textit{gradient} of a smoothed function, we compute a lower bound on the distance of a point to its closest adversarial example, called the {\bf S}econd-{\bf o}rder {\bf S}moothing ({\bf SoS}) robustness certificate. In addition to proving the correctness of this novel certificate, we show that SoS certificates are realizable and therefore tight. Interestingly, we show that the maximum achievable benefits, in terms of certified robustness, from using the additional information of the gradient norm are relatively small: because our bounds are tight, this is a fundamental negative result. The gain of SoS certificates further diminishes if we consider the estimation error of the gradient norms, for which we have developed an estimator. We therefore additionally develop a variant of Gaussian smoothing, called \textit{Gaussian dipole smoothing}, which provides similar bounds to randomized smoothing with gradient information, but with much-improved sample efficiency. This allows us to achieve (marginally) improved robustness certificates on high-dimensional datasets such as CIFAR-10 and ImageNet. Code is available at \url{https://github.com/alevine0/smoothing_second_order}.

\end{abstract}

\section{Introduction}
A topic of much recent interest in machine learning has been the design of deep classifiers with provable robustness guarantees.  In particular, for an $m$-class classifier $h : \R^d \rightarrow [m]$, the $L_2$ \textit{certification problem} for an input $\textbf{x}$ is to find a radius $\rho$ such that, for all $\delta$ with $\|\delta\|_2 < \rho$,  $h(\textbf{x}) = h(\textbf{x}+ \delta)$. This robustness certificate serves as a lower bound on the magnitude of any adversarial perturbation of the input that can change the classification: therefore, the certificate is a security guarantee against adversarial attacks. 

There are many approaches to the certification problem, including exact methods, which compute the precise norm to the decision boundary \citep{tjeng2018evaluating, carlini2017provably,huang2017safety} as well as methods for which the certificate $\rho$ is merely a lower bound on the distance to the decision boundary \citep{wong2018provable, gowal2018effectiveness, raghunathan2018semidefinite}. 

One approach that belongs to the latter category is \textit{Lipschitz function approximation}. Recall that a function $f: \R^d \rightarrow \R$ is $L$-Lipschitz if, for all $\textbf{x}, \textbf{x}'$, $|f(\textbf{x}) -f(\textbf{x}')| \leq L\|\textbf{x}- \textbf{x}'\|_2 $. If a classifier is known to be a Lipschitz function, this immediately implies a robustness certificate. In particular, consider a binary classification for simplicity, where we use an $L$-Lipschitz function $f$ as a classifier, using the sign of $f(\textbf{x})$ as the classification. Then for any input $\textbf{x}$, we are assured that the classification (i.e, the sign) will remain constant for all $\textbf{x}'$ within a radius $|f(\textbf{x})|/L$ of $\textbf{x}$.

Numerous methods for training Lipschitz neural networks with small, known Lipschitz constants have been proposed. \citep{fazlyab2019efficient,zhang2019recurjac,pmlr-v97-anil19a,li2019preventing} 
It is desirable that the network be as expressive as possible, while still maintaining the desired Lipschitz property. \cite{pmlr-v97-anil19a} in particular demonstrates that their proposed method can universally approximate Lipschitz functions, given sufficient network complexity. However, in practice, for the robust certification problem on large-scale input, randomized smoothing \citep{pmlr-v97-cohen19c} is the current state-of-the-art method. The key observation of randomized smoothing (as formalized by \citep{salman2019provably, DBLP:journals/corr/abs-1905-12105}) is that, for any arbitrary \textit{base classifier} function $f : \R^d \rightarrow [0,1]$, the function
\begin{align}\label{eq:cohen_intro}
 \mathbf{x} \rightarrow \Phi^{-1} (p_a) \quad \text{where} \quad p_a(\mathbf{x}) := \mathop{\E}_{\epsilon \sim \gN(0,\sigma^2 I)} f(\mathbf{x}+\epsilon) 
\end{align}

is $(1/\sigma)$-Lipschitz, where $\gN(0,\sigma^2 I)$ is a $d$-dimensional isometric Gaussian distribution with variance $\sigma^2$  and $\Phi^{-1}$ is the inverse normal CDF function. As a result, given the smoothed classifier value $p_a(\rvx)$ at $\rvx$, one can calculate the certified radius $\rho(\rvx) = \sigma\Phi^{-1}(p_a(\rvx))$ in which $p_a(\rvx) \geq 0.5$ (i.e., $\Phi^{-1}(p_a(\rvx)) \geq 0$). This means that we can use $p_a(\rvx) \in \R^d \rightarrow [0,1]$ as a robust binary classifier (with one class assignment if $p_a(\rvx) \geq 0.5$, and the other if $p_a(\rvx) < 0.5$).
\begin{figure}[t]
    \centering
    \includegraphics[width=\textwidth]{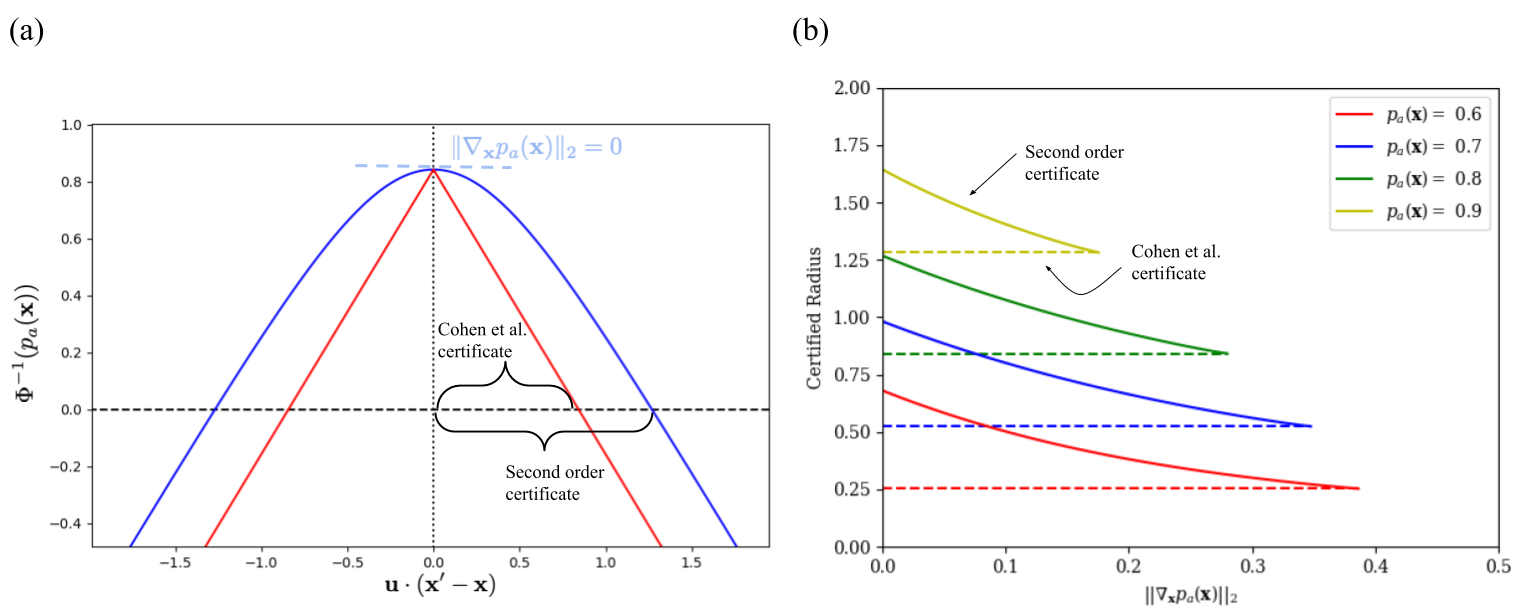}
    \caption{(a) Tight lower bound on the value of a smoothed function at $\rvx'$ (i.e. $p_a(\rvx')$) as a function of $\|\rvx'-\rvx\|_2$. In this example, $p_a(\rvx) = 0.8$ and the smoothing standard deviation $\sigma = 1$. The red line shows the lower bound for the function, with no information about the gradient given. The blue line incorporates the additional information that $\|\nabla_\rvx p_a(\rvx)\|_2 = 0$. Note that the axis at $\Phi^{-1}(p_a(\rvx)) = 0$ corresponds to $p_a(\rvx) = 0.5$, the decision boundary for a binary classifier. (b) Tight robustness certificates for a randomized-smoothed classifier, given the top-class value $p_a(\rvx)$ and the gradient norm $\|\nabla_\rvx p_a(\rvx)\|_2$. The dashed lines show the certificates given  $p_a(\rvx)$ alone. Note that the maximum possible gradient for a smoothed classifier depends on $p_a(\rvx)$ (see Equation \ref{eq:cohen_intro}).}
    \label{fig:intro_fig}
\end{figure}
\cite{pmlr-v97-cohen19c} shows that this is a \textit{tight} certificate result for a classifier smoothed with Gaussian noise: given the value of $p_a(\rvx)$, there exists a base classifier function $f$ such that, if $p_a$ is the Gaussian-smoothed version of $f$, then there exists an $\rvx'$ with $\|\rvx -\rvx'\|_2 = \rho$ such that $p_a(\rvx') = 0.5$. In other words, the certificate provided by \citep{pmlr-v97-cohen19c} is the largest possible certificate for Gaussian smoothing, given only the value of $p_a(\rvx)$. Previous results \citep{li2019certified, lecuyer2019certified} provided looser bounds for Gaussian smoothing.

\cite{Singla20} have recently shown, for shallow neural networks, that, rather than globally bounding the (first-order) Lipschitz constant of the network, it is possible to achieve larger robustness certificates by instead globally bounding the Lipschitz constant of the \textit{gradient} of the network. This second-order, curvature-based method takes advantage of the fact that the gradient at $\rvx$ can be computed easily via back-propagation, so certificates can make use of both $f(\rvx)$ and $\nabla_\rvx f(\rvx)$.

This leads to a question: can we also use the gradient of a smoothed classifier $\nabla_\rvx p_a(\rvx)$ to improve smoothing-based certificates? \textit{ In this work, we show that there is a universal curvature-like bound for all randomly-smoothed classifiers.} Therefore, given $p_a(\rvx)$ and $\nabla_\rvx p_a(\rvx)$, we can compute larger certificates than is possible using the value of $p_a(\rvx)$ alone. Moreover, our bound is tight in that, given only the pair $(p_a(\rvx), \nabla_\rvx p_a(\rvx))$, the certificate we provide is the largest possible certificate for Gaussian smoothing.  We call our certificates ``Second-order Smoothing'' (SoS) certificates. As shown in Figure \ref{fig:intro_fig}, the smoothing-based certificates which we can achieve using second-order smoothing represent relatively modest improvements compared to the first-order bounds. This is a meaningful negative result, given the tightness of our bounds, and is therefore useful in guiding (or limiting) future research into higher-order smoothing certificates. Additionally, this result shows that randomized smoothing (or, specifically, functions in the form of Equation \ref{eq:cohen_intro}) can \textit{not} be used to universally approximate Lipschitz functions: all randomly smoothed functions will have the additional curvature constraint described in this work. 

If the base classifier $f$ is a neural network, computing the expectation in Equation \ref{eq:cohen_intro} analytically is not tractable. Therefore it is standard \citep{lecuyer2019certified,pmlr-v97-cohen19c,salman2019provably} to estimate this expectation using $N$ random samples, and bound the expectation probabilistically. The certificate is then as a high-probability, rather than exact, result, using the estimated lower bound of $p_a(\rvx)$. In Section \ref{sec:sec_order_estimation}, we discuss empirical estimation of the gradient norm of a smoothed classifier for second-order certification, and develop an estimator for this quantity, in which the number of samples required to estimate the gradient scales linearly with the dimensionality $d$ of the input.\footnote{In a concurrent work initially distributed after the submission of this work, \cite{mohapatra2020higher} have proposed an identical second-order smoothing certificate, along with a tighter empirical estimator for the gradient norm. In this estimator, the number of samples required scales with $\sqrt{d}$.} In order to overcome this, in Section \ref{sec:dipole_smoothing}, we develop a modified form of Gaussian randomized smoothing, Gausian Dipole Smoothing, which allows for a \textit{dipole certificate}, related to the second-order certificate, to be computed. Unlike the second-order certificate, however, the dipole certificate has no explicit dependence of dimensionality in its estimation, and therefore can practically scale to real-world high-dimensional datasets.

\section{Preliminaries, Assumptions and Notation}
We use $f(\rvx)$ to represent a generic scalar-valued    ``base'' function to be smoothed. In general, we assume $f \in \R^d \rightarrow [0,1]$. However, for empirical estimation results (Theorem 3), we assume that $f$ is a ``hard'' base classifier: $f \in \R^d \rightarrow \{0,1\}$. This will be made clear in context. The \textit{smoothed} version of $f$ is notated as $p_a \in \R^d \rightarrow [0,1]$, defined as in \eqref{eq:cohen_intro}.


Recall that $\Phi$ is the normal CDF function and $\Phi'$ is the normal PDF function. In randomized smoothing for multi-class problems, the base classifier is typically a vector-valued function $\rvf \in \R^d \rightarrow  \{0,1\}^m, \sum_c \rvf_c(\rvx) = 1$, where $m$ is the number of classes. The final classification returned by the smoothed classifier is then given by $a := \arg\max_c \mathop{\E}_{\epsilon} \rvf_c(\rvx+ \epsilon)$. However, in most prominent implementations \citep{pmlr-v97-cohen19c, salman2019provably}, certificates are computed using only the smoothed value for the estimated \textit{top} class $a$, where $a$ is estimated using a small number $N_0$ of initial random samples, before the final value of $p_a(\rvx)$ is computed using $N$ samples. The certificate then determines the radius in which $p_a(\rvx')$ will remain above $0.5$: this guarantees that $a$ will remain the top class, regardless of the other logits. While some works \citep{lecuyer2019certified,FengWCZN20} independently estimate each smoothed logit, this incurs additional estimation error as the number of classes increases. In this work, we assume that only estimates for the top-class smoothed logit $p_a(\rvx)$ and its gradient $\nabla_\rvx p_a(\rvx)$ are available (although we briefly discuss the case with more estimated logits in Section \ref{sec:multi-class}). When discussing empirical estimation, we use $\eta$ as the accepted probability of failure of an estimation method. 
\section{Second-Order Smoothing Certificate} \label{sec:sos}
We now state our main second-order robustness certificate result:
\begin{theorem} \label{thm:second_order}
 For all $\rvx, \rvx'$ with $\|\rvx-\rvx'\|_2 < \rho$, and for all  $f : \R^d \rightarrow [0,1]$,
\begin{equation}
    p_a(\rvx') \geq \Phi\left(\Phi^{-1}(a'
    + p_a(\rvx))- \frac{\rho}{\sigma}\right) - \Phi\left(\Phi^{-1}(a')- \frac{\rho}{\sigma}\right) \label{eq:main_second_order_thm}
\end{equation} 
where $a'$ is the (unique) solution to 
\begin{equation}
    \Phi'(\Phi^{-1}(a')) - \Phi'(\Phi^{-1}(a'+ p_a(\rvx))) =  -\sigma\|\nabla_{\rvx}  p_a(\rvx)\|_2  \label{eq:anc_second_order_thm}.
\end{equation}
Further, for all pairs $(p_a(\rvx), \|\nabla_\rvx p_a(\rvx)\|_2)$ which are possible, there exists a base classifier $f$ and an adversarial point $\rvx'$ such that Equation 2 is an equality. This implies that our certificate is realizable, and therefore tight.
\end{theorem}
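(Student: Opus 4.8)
The plan is to prove the inequality \eqref{eq:main_second_order_thm} by reducing to a constrained worst-case problem over base classifiers $f$, and then to exhibit an explicit $f$ achieving equality. The key idea is that, by the rotational symmetry of the Gaussian, the worst-case $f$ (the one minimizing $p_a(\rvx')$ subject to the constraints on $p_a(\rvx)$ and $\|\nabla_\rvx p_a(\rvx)\|_2$) depends only on the one-dimensional coordinate along the direction $\rvx' - \rvx$. So I would first reduce to dimension one: write $t$ for the signed coordinate along the unit vector $\rvu := (\rvx'-\rvx)/\rho$, let $g(t) := \E_{\epsilon_\perp} f(\rvx + t\rvu + \epsilon_\perp)$ be the partial smoothing over the orthogonal directions, and observe that $g : \R \to [0,1]$, that $p_a(\rvx)$ and $\partial_{\rvu} p_a(\rvx)$ depend only on the one-dimensional Gaussian convolution of $g$, and that $\|\nabla_\rvx p_a(\rvx)\|_2 \ge |\partial_\rvu p_a(\rvx)|$ with equality attainable. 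Thus it suffices to solve: minimize $(\phi_\sigma * g)(\rho)$ over $g:\R\to[0,1]$ subject to $(\phi_\sigma * g)(0) = p_a(\rvx)$ and $|(\phi_\sigma * g)'(0)| = \|\nabla_\rvx p_a(\rvx)\|_2$, where $\phi_\sigma$ is the 1-D Gaussian density.

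**The variational / level-set argument.**

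Next I would solve this one-dimensional problem. This is a linear program in the "density" $g$, with two linear equality constraints (value and derivative at $0$) and box constraints $0 \le g \le 1$, and a linear objective. By an exchange/bathtub argument, the extremal $g$ is $\{0,1\}$-valued and, because the kernels involved ($\phi_\sigma(\cdot)$, $\phi_\sigma'(\cdot)$, and the shifted $\phi_\sigma(\cdot - \rho)$) have a total-positivity / sign-change structure, the optimal $g$ is the indicator of a half-line, or more precisely of a set of the form $\{t : t \le s\} \cup \{t \ge s'\}$ or a single interval complement — i.e. a union of at most two half-lines determined by two thresholds. Parametrizing $g = \mathbf{1}[t \le c_1] + \mathbf{1}[t \ge c_2]$ (with $c_1 < c_2$), the three quantities become
\begin{align*}
(\phi_\sigma * g)(0) &= \Phi(c_1/\sigma) + 1 - \Phi(c_2/\sigma),\\
(\phi_\sigma * g)'(0) &= \tfrac{1}{\sigma}\big(\Phi'(c_1/\sigma) - \Phi'(c_2/\sigma)\big),\\
(\phi_\sigma * g)(\rho) &= \Phi((c_1-\rho)/\sigma) + 1 - \Phi((c_2-\rho)/\sigma).
\end{align*}
Writing $a' := 1 - \Phi(c_2/\sigma)$ and $a' + p_a(\rvx) = \Phi(c_1/\sigma)$, i.e. $c_1/\sigma = \Phi^{-1}(a' + p_a(\rvx))$ and $c_2/\sigma = \Phi^{-1}(a')$, the value constraint is automatic, the derivative constraint becomes exactly \eqref{eq:anc_second_order_thm} (noting the sign: the worst case has $(\phi_\sigma*g)'(0) \le 0$, giving the $-\sigma\|\nabla\|$), and the objective evaluates to exactly the right-hand side of \eqref{eq:main_second_order_thm}. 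I would also need to check that \eqref{eq:anc_second_order_thm} has a unique solution $a'$ in the admissible range: the left-hand side, as a function of $a'$, is continuous and strictly monotone (this follows from $\Phi'(\Phi^{-1}(\cdot))$ being strictly concave, so the difference of its values at two points separated by a fixed amount $p_a(\rvx)$ is strictly monotone in the lower endpoint), and its range covers exactly the feasible values of $-\sigma\|\nabla_\rvx p_a(\rvx)\|_2$.

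**Realizability and the main obstacle.**

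Finally, for the tightness claim: the minimizer $g$ constructed above is the one-dimensional profile of an explicit base classifier — take $f(\rvy) := g(\langle \rvy - \rvx, \rvu\rangle)$, i.e. $f$ is the indicator of the union of two half-spaces orthogonal to $\rvu$. For this $f$, the smoothed function $p_a$ depends only on the coordinate along $\rvu$, so $\nabla_\rvx p_a(\rvx)$ is exactly aligned with $\rvu$ and has the prescribed norm, and $p_a(\rvx') = (\phi_\sigma * g)(\rho)$ hits the bound with equality at the chosen $\rvx'$. One should also verify that \emph{every} feasible pair $(p_a(\rvx), \|\nabla_\rvx p_a(\rvx)\|_2)$ — i.e. every pair satisfying the first-order smoothing constraint from \eqref{eq:cohen_intro} as visualized in Figure \ref{fig:intro_fig} — arises from some such two-threshold $g$, which is a surjectivity check on the map $(c_1, c_2) \mapsto$ (value, derivative).

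The main obstacle I anticipate is rigorously establishing that the worst-case $g$ really has the two-half-line form — i.e. justifying the level-set/total-positivity step. The linear-programming intuition (extreme points of $\{0 \le g \le 1, \text{two linear constraints}\}$ are $\{0,1\}$-valued with few "pieces") needs to be made precise in an infinite-dimensional setting, and one must rule out minimizers with three or more sign changes using the variation-diminishing property of the Gaussian kernel together with the specific combination of the value constraint, the derivative constraint, and the shifted-evaluation objective. A clean way to do this is a direct perturbation argument: given any feasible $g$ that is not of the claimed form, construct a perturbation $g + \tau h$ (with $h$ supported where $0 < g < 1$, or doing "mass transport" between a $\{g=1\}$ region and a $\{g=0\}$ region) that preserves both constraints to first order while strictly decreasing the objective, deriving a contradiction; the sign pattern needed for $h$ is dictated by comparing the three Gaussian-type weights pointwise, and their log-concavity makes the relevant ratios monotone, which is what forces the thresholds.
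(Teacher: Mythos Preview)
Your overall strategy---reduce to one dimension via partial smoothing along $\rvu$, set up a linear program in $g$ with two linear constraints (value and directional derivative at $0$), identify the extremal $g$ via a Lagrangian/sign-change argument, and then exhibit an explicit slab classifier for tightness---is exactly the paper's approach, including the monotonicity observation that lets one replace the inequality $\partial_\rvu p_a(\rvx) \ge -\|\nabla_\rvx p_a(\rvx)\|_2$ by the worst-case equality.

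However, you have the form of the worst-case $g$ backwards, and this makes your parametrization internally inconsistent. The minimizer is the indicator of a \emph{single interval} (equivalently, $f=1$ on a slab between two parallel hyperplanes and $f=0$ outside), not the indicator of a union of two half-lines. In the Lagrangian, after the change of variable $u=\Phi^{-1}(y)$, one has $g_\Phi = 1$ on the set where an exponential in $u$ lies \emph{below} an affine function of $u$; by convexity of the exponential this set is a single interval (and since the exponential dominates as $u\to+\infty$, $g_\Phi(1^-)=0$), never the complement of one. Your own displayed formulas confirm the mix-up: with $g = \mathbf{1}[t\le c_1] + \mathbf{1}[t\ge c_2]$ and $c_1<c_2$ you compute $(\phi_\sigma * g)(0) = \Phi(c_1/\sigma) + 1 - \Phi(c_2/\sigma)$, but your identifications $c_1/\sigma = \Phi^{-1}(a'+p_a(\rvx))$ and $c_2/\sigma = \Phi^{-1}(a')$ would force $c_1>c_2$ and do not satisfy that value constraint. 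If instead you take $g = \mathbf{1}[c_1 \le t \le c_2]$ with $c_1/\sigma = \Phi^{-1}(a')$ and $c_2/\sigma = \Phi^{-1}(a'+p_a(\rvx))$, then the value, derivative, and objective all match \eqref{eq:main_second_order_thm}--\eqref{eq:anc_second_order_thm} exactly. This is also the picture in Figure~\ref{fig:bound_explain}: the blue (class-$a$) region where $f=1$ is the slab; the two half-spaces you describe are the \emph{adversarial} $\{f=0\}$ regions, not the support of $g$. So the error is localized and the fix is immediate, but as written your ``two-half-lines'' ansatz is the \emph{maximizer} rather than the minimizer of the objective, and the tightness construction (``$f$ is the indicator of the union of two half-spaces'') needs to be corrected to the single-slab classifier.
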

Note that the right-hand side of Equation \ref{eq:main_second_order_thm} is monotonically decreasing with $\rho$: we can then compute a robustness certificate by simply setting $ p_a(\rvx') = 0.5$ and solving for the certified radius $\rho$. Also, $a'$ can be computed easily, because the left-hand side of Equation \ref{eq:anc_second_order_thm} is monotonic in $a'$. Evaluated certificate values are shown in Figure \ref{fig:intro_fig}-b, and compared with first-order certificates.
 \begin{wrapfigure}{R}{0.55\textwidth}
    \centering
    \includegraphics[width=.5\textwidth,trim={3cm 2.3cm 3cm 2.5cm},clip]{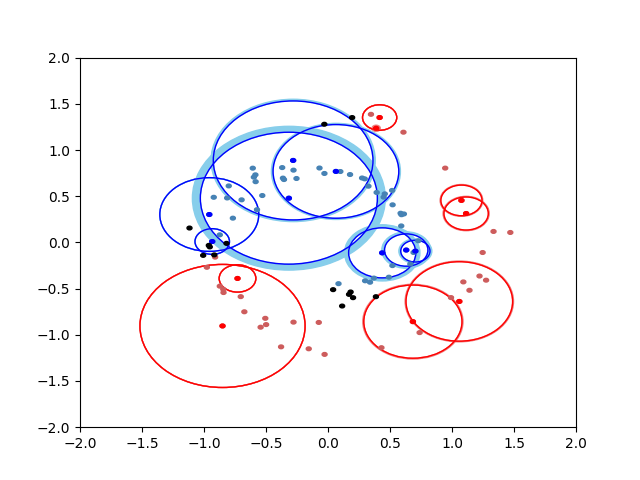}
    \caption{
    Comparison of second-order smoothing certificates to standard Gaussian smoothing certificates on a selection of points from the Swiss Roll dataset. Correctly labeled points with (second-order) certificates are shown in light red and blue, and points with incorrect label or no certificate are in black. For a selection of points, shown in red/blue,  the first-order certified radii shown are as red/blue rings. Increases to certified radii due to second-order smoothing shown are as light blue (light red, absent) rings around certificate radii. For both experiments, $N= 10^8$, and $\eta = 0.001$.}
    \label{fig:swiss_roll}
\end{wrapfigure}
\begin{figure}
    \centering
    \includegraphics[width=\textwidth]{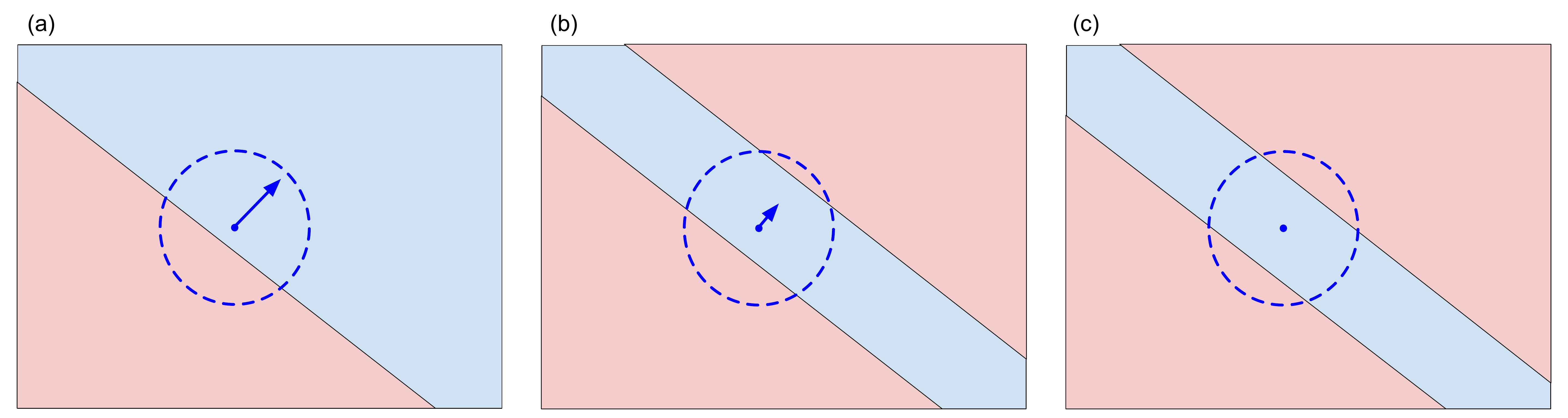}
    \caption{Worst case base classifiers for second-order smoothing for the same value of $p_a(\rvx)$ at different values of $\|\nabla_\rvx p_a(\rvx)\|_2$. The base classifier is $f = 1$ in the blue regions and $f = 0$ in the red regions. The point $\rvx$ is shown as a blue dot, with the Gaussian sampled region used for calculating $p_a(\rvx)$ is approximately shown as a dashed blue circle. $\nabla_\rvx p_a(\rvx)$ is shown as a blue arrow. (a) The gradient takes its maximum possible value:  $\|\nabla_\rvx p_a(\rvx)\|_2 = \sigma^{-1}\Phi'(\Phi^{-1}(p_a(\rvx))$. (b) The gradient has an intermediate value: $0<\|\nabla_\rvx p_a(\rvx)\|_2 < \sigma^{-1}\Phi'(\Phi^{-1}(p_a(\rvx))$. (c) The gradient is zero: $\|\nabla_\rvx p_a(\rvx)\|_2 =0$.  }
    \label{fig:bound_explain}
\end{figure}

All proofs are presented in Appendix \ref{sec:proofs}.
Like in \cite{pmlr-v97-cohen19c}, we proceed by constructing the \textit{worst-case} base classifier $f$ given $p_a(\rvx)$ and $\|\nabla_\rvx p_a(\rvx)\|_2$. This is the base classifier $f$ which creates an adversarial point to the smoothed classifier as close as possible to $\rvx$, given the constraints that $p_a(\rvx)$ and $\|\nabla p_a(\rvx)\|_2$ are equal to their reported values. In \cite{pmlr-v97-cohen19c}, given only $p_a(\rvx)$, this is simply a linear classifier. With the gradient norm, the worst case is that $\rvx$ lies in a region with class $a$ which is a slice between \textit{two} linear decision boundaries, both perpendicular to $\nabla p_a(\rvx)$. See Figure \ref{fig:bound_explain}. Note that, by isometry and because $\nabla p_a(\rvx)$ is the only vector information we have, there is no benefit in  certified radius to having the direction of $\nabla p_a(\rvx)$: the norm is sufficient. In the case of a linear classifier the gradient takes its maximum possible value:  $\|\nabla_\rvx p_a(\rvx)\|_2 = \sigma^{-1}\Phi'(\Phi^{-1}(p_a(\rvx))$. This case is shown in Figure \ref{fig:bound_explain}-a: if the gradient norm is equal to this value, the second-order certificate is identical to the first-order certificate \citep{pmlr-v97-cohen19c}. However, if the gradient norm is smaller, then we \textit{cannot} be in this worst-case linear-classifier scenario. Instead, the new ``worst case'' is constructed by introducing a second ``wrong class'' region opposite to the direction of the adversarial point (Figure \ref{fig:bound_explain}-b). In the extreme case (Figure \ref{fig:bound_explain}-c) where the gradient norm is zero, this is accomplished by balancing two adversarial regions in a ``sandwich'' around $\rvx$.

This ``sandwich'' configuration reveals the relative weakness of gradient information in improving robustness certificates: having zero gradient does \textit{not} require that the adversarial regions be evenly distributed around $\rvx$. Rather, it is sufficient to distribute the adversarial probability mass $1-p_a(\rvx)$ into just two adversarial regions. Therefore, the certified radius, even in this most extreme case, is similar to the \cite{pmlr-v97-cohen19c} certificate in the case with half as much adversarial probability mass (the first-order certificate for $p_a(\rvx) := (1 + p_a(\rvx))/2$).  This can be seen in Figure \ref{fig:intro_fig}-b: note that at $p_a(\rvx) = 0.6$, if the gradient norm is known to be zero, the certificate is slightly below the certificate for $p_a(\rvx) = 0.8$ with no gradient information. The second-order certificate when $(p_a(\rvx) = 0.6, \| \nabla_\rvx p_a(\rvx) |\|_2 = 0)$ is in fact slightly below the first-order certificate for  $p_a(\rvx) = 0.8$,  because the Gaussian noise samples throughout all of space, so the smoothed classifier decision boundary is slightly affected by the adversarial region in the opposite direction of $\rvx$.

Because we can explicitly construct ``worst-case'' classifiers which represent the equality case of Equation \ref{eq:main_second_order_thm}, our certificates are known to be tight: the reported certified radii are the largest possible certificates, if only $p_a(\rvx)$ and $\|\nabla p_a(\rvx)\|_2$ are known.

In Figure \ref{fig:swiss_roll}, we show how our second-order certificate behaves on a simple, two-dimensional, non-linearly separable dataset, the classic Swiss Roll. The increases are marginal, mostly because the certificates using standard randomized smoothing are already fairly tight. On these data, the certified radii for the two classes are nearly touching  in many places along the decision boundary. However, for the blue class, which is surrounded on multiple sides by the red class, there are noticeable increases in the certified radius. This is especially true for points near the center of the blue class, which are at the ``top of the hill'' of the blue class probability, and therefore have smaller gradient.


\subsection{Gradient Norm Estimation} \label{sec:sec_order_estimation}
In order to use the second-order certificate in practice, we must first bound, with high-probability, the gradient norm $\|\nabla_\rvx p_a(\rvx)\|_2$ using samples from the base classifier $f$.  Because Theorem \ref{thm:second_order} provides certificates that are strictly decreasing with $\|\nabla_\rvx p_a(\rvx)\|_2$, it is only necessary to lower bound $\|\nabla_\rvx p_a(\rvx)\|_2$ with high probability.

\cite{salman2019provably} suggest two ways of approximate the gradient vector $\nabla_\rvx p_a(\rvx)$ itself, both based on the following important observation:
\begin{equation}
   \nabla_\rvx p_a(\rvx) =   \mathop{\E}_{\epsilon \sim \gN(0,\sigma^2 I)}[\nabla_{\rvx} f(\rvx + \epsilon)] = \mathop{\E}_{\epsilon \sim \gN(0,\sigma^2 I)}[\epsilon f(\rvx + \epsilon)] / \sigma^2 \label{eq: salman}
\end{equation}

These two methods are:

\begin{enumerate}
\item At each sampled point, one can measure the gradient of $f$ using back-propagation, and take the mean vector of these estimates.
\item At each sampled point, one can multiply $f(\rvx + \epsilon)$ by the noise vector $\epsilon$, and take the mean vector of these estimates.
\end{enumerate}
Note, however, that \cite{salman2019provably} does not provide statistical bounds on these estimates: for our certificate application, we must do so.
While we ultimately use an approach based on method 2, we will first briefly discuss method 1. The major obstacle to using method 1 is that it requires that the base classifier $f$ itself to be a Lipschitz function, with a small Lipschitz constant. This can be understood from Markov's inequality. For example, consider the value of some component $z(\rvx) := \rvu \cdot \nabla f(\rvx)$, where $\rvu$ is an arbitrary vector. Suppose N samples are taken, but that $z$ is distributed such that:
\begin{equation}
 z(\rvx + \epsilon) =
    \begin{cases}  0& \text{ with probability } 1 - \frac{1}{2N}\\
      2N &\text{ with probability } \frac{1}{2N}\\
    \end{cases}
\end{equation}
This would be the case if $f$ is a function that approximates a step function from 0 to 1, with a small buffer region of very high slope, for example. Note that the probability that \textit{any} of the $N$ samples measures the nonzero gradient component is $< 0.5$ , but the expected value of this component is in fact $1.0$. This example shows that, in order to accurately estimate the gradient with high probability,  the number of samples used must \textit{at least} scale linearly with the \textit{maximum} possible value of the gradient norm for $f$. For un-restricted deep neural networks, Lipschitz constants are NP-hard to compute, and upper bounds on them are typically very large \citep{virmaux2018lipschitz}. Of course, we could use Lipschitz-constrained networks  as described in Section 1 for the base classifier, but this would defeat the purpose of using randomized smoothing in the first place. Moreover, in standard ``hard'' randomized smoothing as typically implemented  \citep{pmlr-v97-cohen19c, salman2019provably}, the range of $f$ is $\{0,1\}$, so $f$ is non-differentiable: therefore, this back-propagation method can not be used at all.   

We therefore use method 2. 
In particular, we reject the naive approach of estimating each component independently, taking a union bound, and the taking the norm: not only would the error in the norm-squared scale with $d$ as the error from each component accumulates, but there would be an additional dependence on $d$ from the union bound: each component would have to be bounded with failure probability $\eta / d$, where $\eta$ is the total failure probability for measuring the gradient norm. Note that this issue will also be encountered in method 1 above, but in that case, a loose upper bound could at least be achieved without this dependency using Jensen's inequality (the mean of the norms of the gradient is larger than the norm of the mean). 

Instead, we estimate the norm-squared of the mean using a single, unbiased estimator. Note that: 
\begin{equation}
\begin{split}
     \| \nabla_{\rvx} \E_\epsilon[f(\rvx + \epsilon)]\|_2^2 &= \sigma^{-4}\E_\epsilon[\epsilon f(\rvx +\epsilon)] \cdot \E_\epsilon[\epsilon f(\rvx +\epsilon)]  = \\ & \sigma^{-4}\E_\epsilon[\epsilon f(\rvx +\epsilon)] \cdot \E_{\epsilon'}[\epsilon' f(\rvx +\epsilon')]  =\\ & \sigma^{-4}\E_{\epsilon,\epsilon'}[(\epsilon f(\rvx +\epsilon)) \cdot (\epsilon'  f(\rvx +\epsilon'))]   
\end{split}
\end{equation}
In other words, we can estimate the norm-squared of the mean by taking pairs of smoothing samples, and taking the dot product of the noise vectors times the product of the sampled values. We show that this is a subexponential random variable (see Appendix), which gives us an asymptotically linear scaling of $N$ with $d$:
\begin{theorem} \label{thm:so-empirical}
 Let $V := \E_{\epsilon,\epsilon'}[(\epsilon f(\rvx +\epsilon)) \cdot (\epsilon'  f(\rvx +\epsilon'))] $, and $\tilde{V}$ be its empirical estimate. If $n$ pairs of samples ($= N/2$) are used to estimate $V$, then, with probability at most $\eta$, $\E[V] - \tilde{V} \geq t$, where: 
\begin{equation}
  t =   \begin{cases}
{4\sigma^2}\sqrt{- \frac{d}{n}\ln(\eta)} &\text{if $-2 \ln(\eta) \leq dn$}\\
-\frac{4\sqrt{2}\sigma^2}{n} \ln(\eta)&\text{if $-2 \ln(\eta) > dn$}\\
\end{cases}
\end{equation}
\end{theorem}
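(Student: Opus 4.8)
The plan is to show that each summand of the estimator is a subexponential random variable and then apply a standard Bernstein-type tail bound. Write $\tilde V=\tfrac1n\sum_{i=1}^n X_i$, where $X_i:=(\epsilon_i f(\rvx+\epsilon_i))\cdot(\epsilon_i' f(\rvx+\epsilon_i'))$ and the pairs $(\epsilon_i,\epsilon_i')$ are i.i.d.\ with each of $\epsilon_i,\epsilon_i'$ drawn from $\gN(0,\sigma^2 I)$. Two facts will be used repeatedly: by independence of $\epsilon_i$ and $\epsilon_i'$ together with Equation~\ref{eq: salman}, $\E[X_i]=\big\|\E_\epsilon[\epsilon f(\rvx+\epsilon)]\big\|_2^2=\sigma^4\|\nabla_\rvx p_a(\rvx)\|_2^2=V\ge 0$; and since the largest attainable gradient of a Gaussian-smoothed classifier is $\sigma^{-1}\Phi'(0)=1/(\sigma\sqrt{2\pi})$, also $0\le V\le\sigma^2/(2\pi)$. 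The argument then has three steps: (i) bound the moment generating function of $X_i$; (ii) apply a one-sided Chernoff bound to $\sum_i(V-X_i)$; (iii) optimize the exponential tilt and invert the tail bound at level $\eta$, producing the two regimes, with the crossover occurring exactly where the two candidate expressions for $t$ coincide, i.e.\ at $-2\ln\eta=dn$.

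Step (i) is the heart of the argument, and it is where the hardness of $f$ enters: since $f$ is $\{0,1\}$-valued, $G_i:=f(\rvx+\epsilon_i)f(\rvx+\epsilon_i')\in\{0,1\}$, so $X_i=G_i\,u_i$ with $u_i:=\epsilon_i\cdot\epsilon_i'$. The subtlety is that the MGF bound must tend to $1$ as $\lambda\to0$: a crude estimate like $|X_i|\le|u_i|$, which gives $\E[e^{-\lambda X_i}]\le 2\,\E[e^{\lambda|u_i|}]$, only controls the MGF up to a multiplicative constant, and that constant raised to the $n$-th power in the Chernoff product would be fatal. Instead I would use the exact identity $e^{\lambda X_i}=1+G_i(e^{\lambda u_i}-1)$ (valid because $G_i\in\{0,1\}$); writing $e^{\lambda u_i}-1=\lambda u_i+(e^{\lambda u_i}-1-\lambda u_i)$, using $e^z-1-z\ge 0$ with $0\le G_i\le 1$, and using $\E[u_i]=0$, this yields
\[
\E[e^{\lambda X_i}]\ \le\ \lambda V+\E[e^{\lambda u_i}],\qquad |\lambda|<\sigma^{-2},\ \text{both signs}.
\]
Since $u_i=\sum_{j=1}^d\epsilon_{ij}\epsilon_{ij}'$ is a sum of $d$ independent products of independent Gaussians, $\E[e^{\lambda u_i}]=\prod_{j=1}^d\E[e^{\lambda\epsilon_{ij}\epsilon_{ij}'}]=(1-\lambda^2\sigma^4)^{-d/2}$. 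Combining these with $0\le V\le\sigma^2/(2\pi)$ and the elementary estimates $-\ln(1-x)\le 2x$ and $e^x-1\le 2x$ on $[0,\tfrac12]$ then gives, for $|\lambda|$ below a fixed multiple of $\sigma^{-2}$, an MGF bound of subexponential type: $\E[e^{\pm\lambda(X_i-V)}]\le\exp(c\,\lambda^2 d\sigma^4)$ for an absolute constant $c$, so $X_i$ is subexponential with variance proxy of order $d\sigma^4$ and (in the relevant range) Bernstein scale of order $\sigma^2$.

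For steps (ii)--(iii): since $\tilde V$ falls short of $V$ by at least $t$ exactly when $\sum_i(V-X_i)\ge nt$, the Chernoff bound together with the estimate from step (i) gives, for $0<\lambda$ below a fixed multiple of $\sigma^{-2}$,
\[
\Pr[V-\tilde V\ge t]\ \le\ \exp\!\big(-n\lambda t+c\,n\lambda^2 d\sigma^4\big),
\]
and minimizing the exponent over the admissible range of $\lambda$ produces the usual subexponential dichotomy. The unconstrained minimizer $\lambda^\star$ is of order $t/(d\sigma^4)$ and is admissible precisely when $t$ does not exceed a fixed multiple of $d\sigma^2$; in that case $\Pr[V-\tilde V\ge t]\le\exp(-nt^2/(c'd\sigma^4))$, and setting this equal to $\eta$ gives $t$ equal to a constant multiple of $\sigma^2\sqrt{-(d/n)\ln\eta}$. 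Otherwise the minimizer sits at the endpoint $\lambda$ of order $\sigma^{-2}$, where the MGF estimate also contributes a term of order $dn$ to the exponent; equating the resulting bound to $\eta$, the regime condition $-2\ln\eta>dn$ is exactly what makes this $dn$ term subdominant relative to $-\ln\eta$, leaving $t$ equal to a constant multiple of $-(\sigma^2/n)\ln\eta$. Chasing the constants through these two optimizations conservatively produces the stated $4\sigma^2\sqrt{-(d/n)\ln\eta}$ and $-(4\sqrt2\,\sigma^2/n)\ln\eta$, and a direct computation confirms these coincide exactly at $-2\ln\eta=dn$, the stated threshold.

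The main obstacle will be step (i): decoupling the $\{0,1\}$ factor $f(\rvx+\epsilon_i)f(\rvx+\epsilon_i')$ from the bilinear Gaussian form $\epsilon_i\cdot\epsilon_i'$ cleanly enough that the MGF bound behaves correctly near $\lambda=0$ --- the identity $e^{\lambda X_i}=1+G_i(e^{\lambda u_i}-1)$ with $e^z-1-z\ge 0$ is what makes this work --- together with the bookkeeping needed so that the lower-order corrections from bounding $V$ and from the logarithm estimate are absorbed into the subexponential parameters without disturbing the exact crossover $-2\ln\eta=dn$. The remaining steps are standard.
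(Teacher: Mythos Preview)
Your plan is correct and leads to the same subexponential-plus-Bernstein conclusion as the paper, but the route you take in step (i) is genuinely different. The paper does not use the identity $e^{\lambda G u}=1+G(e^{\lambda u}-1)$. Instead it writes the centered variable $V'=X_i-\E[X_i]$ as a sum $A+B$, where $A$ replaces $G_i u_i$ by the conditional mean of $u_i$ on the event $\{G_i=0\}$ and $B$ is a bounded two-point random variable; Jensen's inequality then shows that each of $A$ and $B$ has MGF dominated by that of $u_i=\epsilon\cdot\epsilon'$, and Cauchy--Schwarz on $\E[e^{\lambda A}e^{\lambda B}]$ combines them at the cost of doubling both subexponential parameters, yielding the $(2\sqrt{2}\sigma^2\sqrt d,\,2\sqrt{2}\sigma^2)$ constants that produce exactly $4\sigma^2$ and $4\sqrt{2}\sigma^2$ in the final bound. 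Your approach is more elementary (no decomposition, no Cauchy--Schwarz) and in fact sharper: carrying your inequality $\E[e^{\lambda(V-X_i)}]\le e^{\lambda V}(M(\lambda)-\lambda V)$ through with $e^{x}(1-x)\le 1$, $M(\lambda)\le e^{d\lambda^2\sigma^4}$ for $\lambda^2\sigma^4\le 1/2$, and $1+c(e^y-1)\le e^{cy}$ gives subexponential parameters strictly smaller than the paper's, so your ``conservatively'' is warranted. One small point: for the one-sided tail in the theorem you only need the $\lambda>0$ direction of your MGF bound on $V-X_i$, which is where the $e^{\lambda V}$ correction enters and where the bound $V\le\sigma^2/(2\pi)$ is actually used; the other direction is even cleaner since $e^{-\mu V}(1+\mu V)\le 1$ and $e^{-\mu V}\le 1$ give $\E[e^{\mu(X_i-V)}]\le M(\mu)$ directly.
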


Note that in practice, we can use the same samples to estimate $\|\nabla_\rvx p_a(\rvx)\|_2$ as are used to estimate $p_a(\rvx)$. However, this requires reducing the failure probability of each estimate to $\eta' = \eta / 2$, in order to use a union bound. This means that, if $N$ is small (or $d$ large), second-order smoothing can in fact give worse certificates than standard smoothing, because the benefit of a (loose, for $N$ small) estimate of the gradient is less significant than the negative effect of reducing the estimate of $p_a(\rvx)$. As shown in Figure \ref{fig:emp_and_dipole}-a, even for very large $N$ and relatively small dimension, the empirical estimation  significantly reduces the radii of certificates which can be calculated. See Section \ref{sec:experiments} for experimental results. 

\begin{figure}
    \centering
    \includegraphics[width=.9\textwidth] {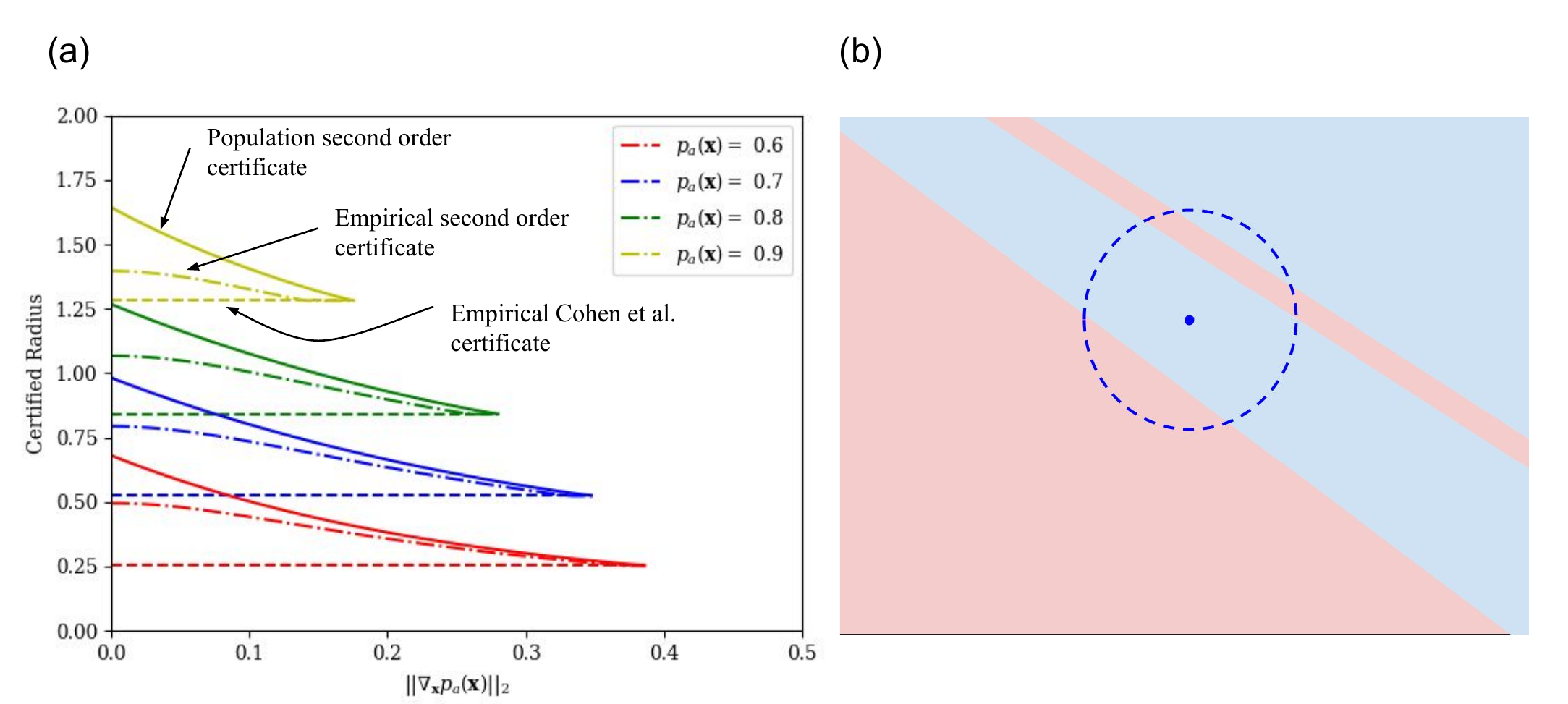}
    \caption{(a) Empirical second-order smoothing certificates, with $d=49$ (corresponding to $7\times 7$ MNIST experiments), $N = 10^8$, and $\eta=.001$ (b) Worst case classifier for dipole smoothing.}
    \label{fig:emp_and_dipole}
\end{figure}
\subsection{Upper-bound and multi-class certificates} \label{sec:multi-class}
We can easily convert Theorem \ref{thm:second_order} into a tight upper bound on $p_a(\rvx')$ by simply evaluating it for $f' = 1-f$ (and therefore $p_a' = 1-p_a$). If estimates and gradients are available for multiple classes, it would then be possible to achieve an even larger certificate, by setting the lower bound of the top logit equal to the upper bounds of each of the other logits. Note, however, that unlike first-order smoothing works \citep{lecuyer2019certified, FengWCZN20} which use this approach, it is not sufficient to compare against just the ```runner-up'' class, because other logits may have less restrictive upper-bounds due to having larger gradients. As discussed above, gradient norm estimation can be computationally expensive, so gradient estimation for many classes may not be feasible.  Also, note that while this approach would produce larger, correct certificates, we do \textit{not} claim that these would be \textit{tight} certificates given the value and gradient information for \textit{all} classes: the ``worst case'' constructions we describe above for a single logit might not be simultaneously construct-able for multiple logits.  
\section{Dipole Smoothing}
For large-scale image datasets, the dependence on $d$ in Theorem \ref{thm:so-empirical} can create statistical barriers. However, the general approach of second-order smoothing, especially using the discrete estimation method (method 2) described above, has an interesting  interpretation: rather than using simply the mean of $f(\rvx + \epsilon)$, we are also using the \textit{geometrical} distribution of the values of $f(\rvx + \epsilon)$ in space to compute a larger certified bound. In particular, if we can show that points which are adversarial for the base classifier (points with $f(\rvx+\epsilon) = 0$) are \textit{dispersed}, then this will imply larger certificates, because it makes it impossible for a perturbation in a single direction to move $\rvx$ towards the adversarial region. Second-order smoothing, above, is merely an example of this. 

We therefore introduce \textit{Gaussian Dipole smoothing}. This is a method which, like second-order smoothing, also harnesses the geometrical distribution of the values of $f(\rvx)$ to improve certificates. However, unlike second-order smoothing, there is no explicit dependence on $d$ in the empirical dipole smoothing bound. In this method, when we sample $f(\rvx + \epsilon)$ when estimating $p_a(\rvx)$, we also sample  $f(\rvx - \epsilon)$. This allows us to compute two quantities:

\begin{equation}
    \begin{split}
        C^S &:=\E_\epsilon[f(\rvx+\epsilon)f(\rvx-\epsilon)] \\
        C^N &:=\E_\epsilon[f(\rvx+\epsilon) - f(\rvx+\epsilon)f(\rvx-\epsilon)]\\
    \end{split}
\end{equation}
\label{sec:dipole_smoothing}

The certificate we can calculate is then as follows:
\begin{theorem}
 For all $\vx, \vx'$ with $\|\vx-\vx'\|_2 < \rho$, and for all  $f : \R^d \rightarrow [0,1]$,
\begin{equation}
     p_a(\rvx') \geq
     \Phi\left(\Phi^{-1}(C^N)- \frac{\rho}{\sigma}\right) +
     \Phi\left(\Phi^{-1}(\frac{1+C^S}{2})- \frac{\rho}{\sigma}\right) - \Phi\left(\Phi^{-1}(\frac{1-C^S}{2})- \frac{\rho}{\sigma}\right)
\end{equation}

\end{theorem}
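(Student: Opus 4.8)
The plan is to mirror the Neyman--Pearson-style worst-case-classifier argument used for Theorem~\ref{thm:second_order} (and originally by \cite{pmlr-v97-cohen19c}), but now with the base function $f$ replaced by the auxiliary quantities it induces on antipodal pairs. First I would fix $\rvx$ at the origin without loss of generality, fix an adversarial direction, and write $\rvx' = \rho\, \vu$ for a unit vector $\vu$; by isometry we may take $\vu = \ve_1$. The key observation is that sampling $f(\rvx+\epsilon)$ and $f(\rvx-\epsilon)$ simultaneously is equivalent to defining, on the space of noise vectors $\epsilon$, a partition into three measurable sets: $S := \{\epsilon : f(\rvx+\epsilon) = f(\rvx-\epsilon) = 1\}$ (the ``symmetric-positive'' pairs), $N := \{\epsilon : \text{exactly one of } f(\rvx+\epsilon), f(\rvx-\epsilon) \text{ equals } 1\}$ (the ``asymmetric'' pairs), and the complement (both zero). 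These sets are symmetric / antisymmetric under $\epsilon \mapsto -\epsilon$: $S$ and its complement-within-both-zero are symmetric, while $N$ is a union of a set $N_+$ and its reflection $N_-$ with $N_+ \cap N_- = \emptyset$. Under $\epsilon \sim \gN(0,\sigma^2 I)$ we have $\gN(S) = C^S$ and $\gN(N) = C^N$ (after noting $\E_\epsilon[f(\rvx+\epsilon)] = C^S + C^N$, consistent with the definitions given). For the hard-classifier case $f \in \{0,1\}$ this is exact; for $f \in [0,1]$ one uses the standard device of an auxiliary uniform random variable to reduce to the hard case.

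The second step is to lower-bound $p_a(\rvx') = \E_{\epsilon}[f(\rvx' + \epsilon)]$ by decomposing $f(\rvx'+\epsilon)$ according to which of the three regions $\epsilon$ (recentred appropriately) falls into, and to bound each contribution separately using the one-dimensional Gaussian CDF bound that underlies Cohen et al.'s result: for any measurable set $A$ with $\gN_\rvx(A) = q$, the worst-case value of $\gN_{\rvx'}(A)$ is $\Phi(\Phi^{-1}(q) - \rho/\sigma)$, attained when $A$ is a halfspace with inner normal $\vu$. The symmetric positive mass $C^S$ behaves, for the purpose of an adversary moving in direction $\vu$, like mass that must be ``doubly covered'': because $S$ is reflection-symmetric, the adversary cannot push $\rvx$ out of $S$ more efficiently than a symmetric slab would allow, and the worst-case contribution to $p_a(\rvx')$ from $S$ is $\Phi(\Phi^{-1}(\tfrac{1+C^S}{2}) - \tfrac{\rho}{\sigma}) - \Phi(\Phi^{-1}(\tfrac{1-C^S}{2}) - \tfrac{\rho}{\sigma})$ --- this is exactly the ``sandwich'' term from the second-order analysis, with total probability mass $1 - C^S$ split into two slabs around $\rvx$, here forced by the $\pm\epsilon$ symmetry rather than by a gradient constraint. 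The asymmetric mass $C^N$, on the other hand, can be entirely concentrated on one side (the adversary faces no symmetry obstruction), so its worst-case contribution is the plain Cohen-type term $\Phi(\Phi^{-1}(C^N) - \tfrac{\rho}{\sigma})$. Summing these gives the stated bound.

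The third and most delicate step is the rigorous justification that these three worst cases can be \emph{simultaneously} realized --- i.e., that there really exists a base classifier $f$ (equivalently, a choice of the sets $S$, $N_+$, $N_-$ consistent with the reflection symmetry and disjointness constraints, plus the $f=1$ / $f=0$ labelling) achieving all three extremal configurations at once, so that the inequality is tight and the chain of per-region bounds is not lossy. Concretely I would exhibit $f$ as an indicator of a union of parallel slabs and halfspaces all orthogonal to $\vu$, arranged so that: the $C^N$-mass sits in a halfspace on the far side from $\rvx'$, and the $C^S$-mass sits in a slab straddling $\rvx$ of the appropriate width, with the remaining mass (the ``both-zero'' region) filling in symmetrically --- and then check that the antipodal map $\epsilon \mapsto -\epsilon$ sends this configuration to one consistent with the definitions of $C^S$ and $C^N$. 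The main obstacle is precisely this consistency bookkeeping: the constraints ``$N_+$ and $N_-$ are reflections of each other and disjoint'' and ``$S$ is reflection-invariant'' must be compatible with the one-dimensional extremal (halfspace/slab) geometry, and one must verify that the claimed extremal sets genuinely lie in the feasible region for every achievable pair $(C^S, C^N)$ (in particular that the arguments of the $\Phi^{-1}$'s stay in $(0,1)$, which requires $C^N + C^S \le 1$ and $C^S \le 1$, both automatic). A secondary technical point is handling soft $f \in [0,1]$ and making the ``exactly one of $f(\rvx+\epsilon), f(\rvx-\epsilon)$'' phrasing precise via the auxiliary-randomization trick; this is routine once the hard case is settled. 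I would relegate all of this, as the excerpt indicates, to Appendix~\ref{sec:proofs}.
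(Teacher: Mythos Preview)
Your plan is correct and parallels the paper's proof closely. The one substantive difference is in the decomposition: you partition the noise space into \emph{sets} $S$, $N_+$, $N_-$ (and then need the auxiliary-randomization trick to handle soft $f$), whereas the paper decomposes $f$ itself as a sum of \emph{functions}, $f^S(\epsilon) := f(\epsilon)f(-\epsilon)$ and $f^N(\epsilon) := f(\epsilon)\bigl(1-f(-\epsilon)\bigr)$, both automatically $[0,1]$-valued with $f^S$ even in $\epsilon$. This function-level split handles soft $f$ with no extra device and feeds directly into the one-dimensional reduction: after marginalizing to $g^S(z),\,g^N(z)$ and changing variables $y=\Phi(z/\sigma)$, evenness becomes $g^S_\Phi(y)=g^S_\Phi(1-y)$, and the constrained minimization over symmetric $g^S_\Phi$ collapses (via the identity $e^{\Phi^{-1}(y)}+e^{-\Phi^{-1}(y)}=2\cosh(\Phi^{-1}(y))$, monotone on $[0,\tfrac12]$) to exactly the centered slab you describe; the $g^N$ piece is just Cohen's halfspace bound. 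Your third step---verifying that the two worst cases are simultaneously realizable---is not needed for the theorem as stated, which asserts only the lower bound, not tightness: since the objective is additive in $g^S_\Phi$ and $g^N_\Phi$, relaxing the coupling constraint $g^S_\Phi+g^N_\Phi\le 1$ can only make the bound weaker and hence still valid. The paper makes precisely this observation in a footnote (and remarks that the relaxed minimizers happen to satisfy the dropped constraint anyway), so the bookkeeping you flag as ``the main obstacle'' can simply be skipped.
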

We also compute this bound by constructing the worst possible classifier. In this case, the trick is that, if two adversarial sampled  points are opposite one another (i.e., $f(\rvx+\epsilon) = f(\rvx-\epsilon) = 0$) then they cannot both contribute to the same adversarial ``direction''. In the worst case, the ``reflected'' adversarial points form a plane opposite the base classifier decision boundary (See Figure \ref{fig:emp_and_dipole}-b). In the extreme case where $C^N = 0$, the ``worst case'' classifier is the same as for second-order smoothing. 

Experimentally, we simply need to lower-bound both $C^S$ and $C^N$ from samples. This reduces the precision of our estimates, for two reasons: we have half as many \textit{independent} samples for the same number of evaluations we must perform, and we are bounding two quantities, which requires halving the error probability for each. However, unlike second-order smoothing, there is no dependence on $d$: this allows for practical certificates of real-world datasets.
\section{Experiments} \label{sec:experiments}
\begin{figure} [p]
    \centering
    \includegraphics[width=\textwidth]{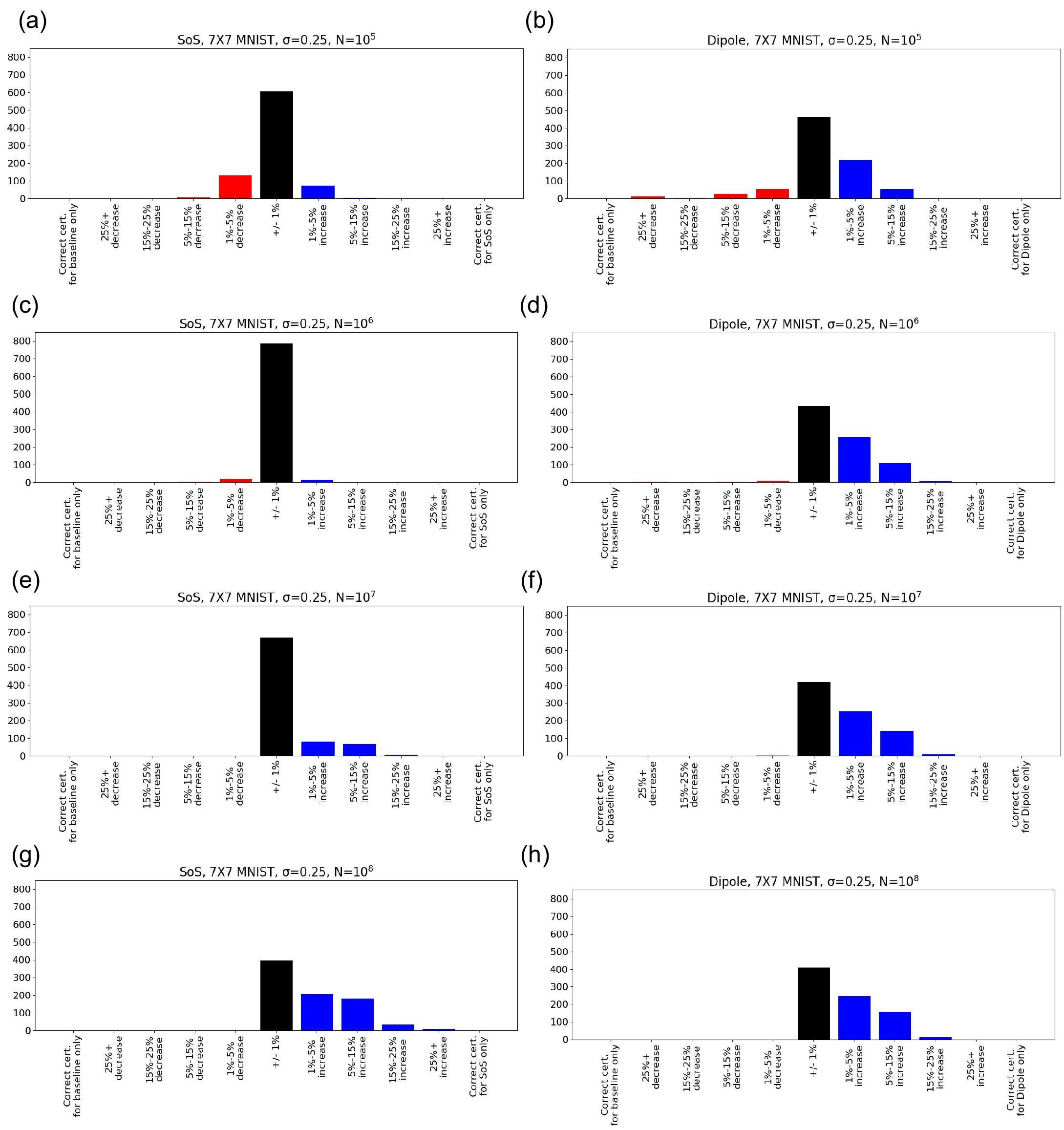}
    \caption{ Experiments on $7\times7$ MNIST. Reported is the distribution of the improvement (or reduction) of higher-order certificates from certificates computed using standard (first-order) randomized smoothing, for each tested image. For all, $\sigma = 0.25$. For (a, c, e, g), Second-order Smoothing is used. For (b, d, f, h), Gaussian dipole smoothing is used. For (a, b), $N = 10^5$. For (c, d), $N = 10^6$. For (e, f), $N = 10^7$. For (g, h), $N = 10^8$. }
    \label{fig:expsl-a}
\end{figure}
\begin{figure} [t]
    \centering
    \includegraphics[width=\textwidth]{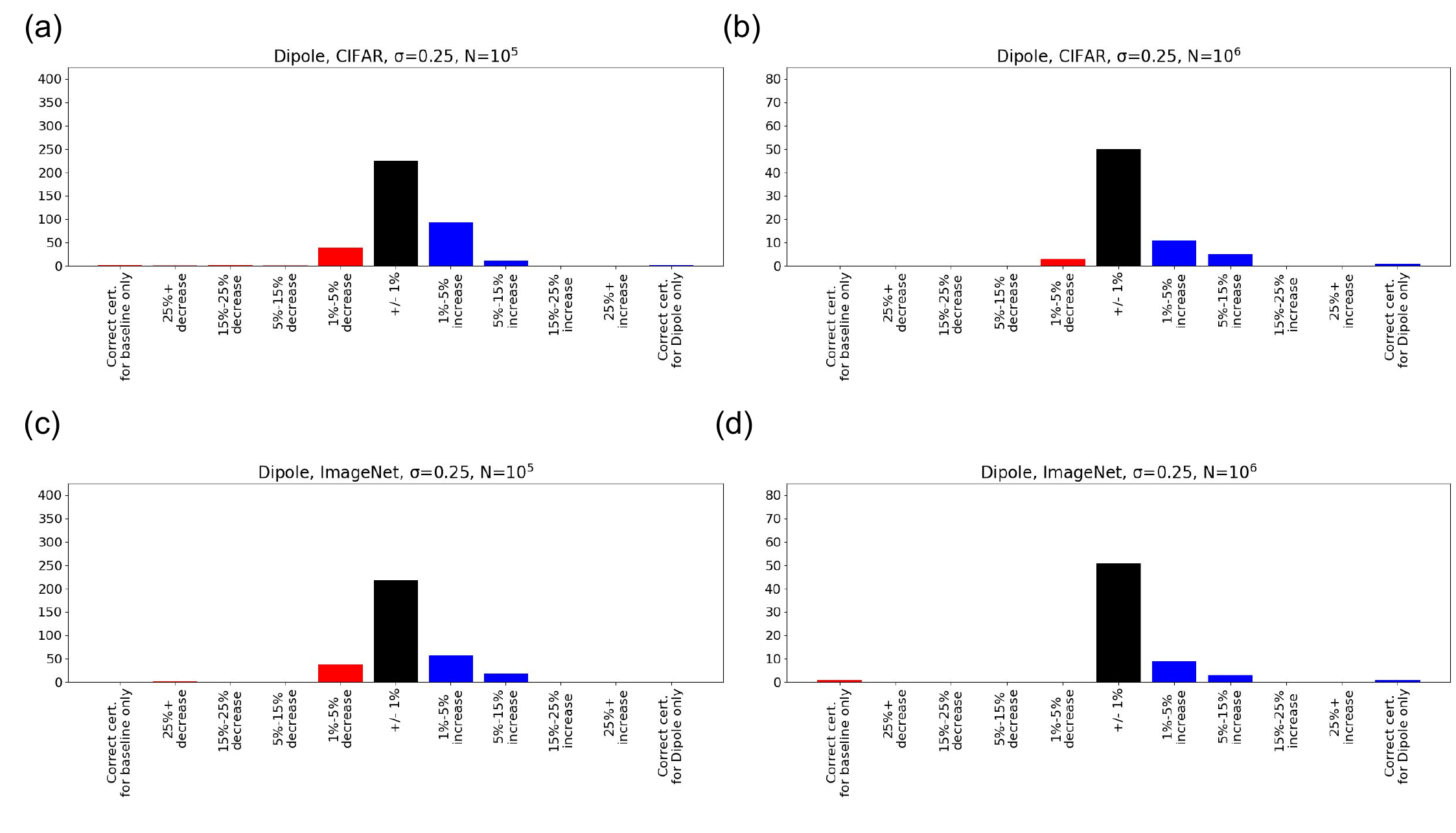}
    \caption{ Dipole smoothing experiments, with $\sigma=0.25$, on CIFAR-10 (a,b) and ImageNet (c,d). For (a,c), $N = 10^5$. For (b,d), $N = 10^6$. }
    \label{fig:expsl-b}
\end{figure}
Experimental results are presented in Figures \ref{fig:expsl-a} and \ref{fig:expsl-b}, with further results in Appendix \ref{sec:appendix_figures}. Because both dipole and second-order certificates reduce the precision with which empirical quantities needed for certification can be estimated, but both provide strictly larger certificates at the population level, the key question becomes at what number of samples $N$ does each higher-order method become beneficial. Note that in the figures, we are comparing the new methods to standard smoothing, using the \textit{same} $N$ for standard smoothing as for the new method.
Due to the poor scaling of second-order certificates with dimension, we tested second-order smoothing on a low-dimensional dataset, $7 \times 7$ MNIST. However, significant increases to certificates were not seen until $N=10^7$ even on this dataset. By contrast, dipole smoothing is beneficial for many images even when smaller numbers of smoothing samples are used. Because it scales to higher-dimensional data, we also tested Gaussian dipole smoothing on CIFAR-10 and ImageNet, where it led to modest improvements in certificates, in particular at $N=10^6$. In Appendix \ref{sec:curves}, we show the absolute, rather than relative, certified accuracy curves for the experiments shown in Figures \ref{fig:expsl-a} and \ref{fig:expsl-b}. These plot show that higher-order smoothing techniques (SoS and Gaussian Dipole smoothing) are mostly beneficial for increasing the certificates of images with small certified radii. In cases where certificates are already large, increased estimation error can lead to a decrease in certificates, but this effect is  small relative to the magnitudes of these certificates (typically $< 1$\%).
\section{Conclusion}
In this work, we explored the limits of using gradient information to improve randomized smoothing certificates. In particular, we introduced second-order smoothing certificates and showed tight and realizable upper bounds on their maximum achievable benefits. We also proposed Gaussian dipole smoothing, a novel method for robustness certification, which can improve smoothing-based robustness certificates even on large-scale data sets. This introduces a broader question for future work: what other information about the spacial distribution of classes in randomized smoothing can be efficiently used to improve robustness certificates?



\bibliography{iclr2021_conference}
\bibliographystyle{iclr2021_conference}

\appendix
\section{Proofs} \label{sec:proofs}
\subsection{Simple proof of Theorem 1 from \cite{pmlr-v97-cohen19c}}
We first provide a novel, simple, and intuitive proof for first-order randomized smoothing: this will allow us to develop methods and notations for later proofs.
\begin{theorem*} \label{thm:cohen}
Let  $\epsilon \sim \gN(0,\sigma^2 I)$. For all $\rvx, \rvx'$ with $\|\rvx-\rvx'\|_2 < \rho$, and for all  $f : \R^d \rightarrow [0,1]$:
\begin{equation}
    \E_\epsilon[f(\rvx'+\epsilon)] \geq  \Phi\left(\Phi^{-1}\left(\E_\epsilon[f(\rvx+\epsilon)]\right)-\frac{\rho}{\sigma}\right)
\end{equation}
Where $\Phi$ is the normal cdf function and $\Phi^{-1}$ is its inverse.
\end{theorem*}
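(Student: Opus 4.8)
The inequality is invariant under rigid motions, so the plan is to first collapse it to one dimension. Take $\rvx=\vzero$ and $\rvx'=r\,\ve_1$ with $r:=\|\rvx-\rvx'\|_2<\rho$ (if $r=0$ the claim is immediate since $\Phi$ is increasing), and split the noise as $\epsilon=(\epsilon_1,\epsilon_\perp)$ with $\epsilon_1\sim\gN(0,\sigma^2)$ and $\epsilon_\perp\sim\gN(0,\sigma^2 I_{d-1})$ independent. Defining the measurable reduction
\begin{equation}
  g(t):=\mathop{\E}_{\epsilon_\perp\sim\gN(0,\sigma^2 I_{d-1})}\big[\,f\big((t,\epsilon_\perp)\big)\,\big]\in[0,1],
\end{equation}
Fubini gives $p_a(\rvx)=\mathop{\E}_{s\sim\gN(0,\sigma^2)}[g(s)]$ and $p_a(\rvx')=\mathop{\E}_{s\sim\gN(0,\sigma^2)}[g(s+r)]$. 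Hence it suffices to show that for every measurable $g:\R\to[0,1]$ and every $r\ge 0$, writing $p:=\mathop{\E}_{s\sim\gN(0,\sigma^2)}[g(s)]$,
\begin{equation}
  \mathop{\E}_{s\sim\gN(0,\sigma^2)}[g(s+r)]\ \ge\ \Phi\!\left(\Phi^{-1}(p)-\tfrac{r}{\sigma}\right).
\end{equation}

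Next I would pin down the worst-case $g$, the one-dimensional analogue of the halfspace base classifier that the paper later generalizes. Let $g^\star:=\1[\,\cdot\le c\,]$ with $c:=\sigma\,\Phi^{-1}(p)$, so that $\mathop{\E}_{s\sim\gN(0,\sigma^2)}[g^\star(s)]=\Phi(c/\sigma)=p$ as well (the endpoints $p\in\{0,1\}$, where $c=\pm\infty$, make the target inequality trivial and are set aside). I claim $g^\star$ minimizes $\mathop{\E}_{s}[g(s+r)]$ over all $[0,1]$-valued $g$ with Gaussian mean $p$. This is a Neyman--Pearson / bathtub argument: write $\varphi_\sigma$ for the $\gN(0,\sigma^2)$ density and $h:=g-g^\star$, so that $\int h\,\varphi_\sigma=0$, while $h\le 0$ on $(-\infty,c]$ and $h\ge 0$ on $(c,\infty)$ since $0\le g\le 1$. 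After the substitution $t=s+r$, and using that the likelihood ratio $\varphi_\sigma(t-r)/\varphi_\sigma(t)=\exp\!\big((2rt-r^2)/(2\sigma^2)\big)$ is nondecreasing in $t$, I subtract its value at $t=c$ (which is free because $\int h\,\varphi_\sigma=0$) to get
\begin{equation}
  \mathop{\E}_{s}[g(s+r)]-\mathop{\E}_{s}[g^\star(s+r)]
  =\int h(t)\,\varphi_\sigma(t)\left(\frac{\varphi_\sigma(t-r)}{\varphi_\sigma(t)}-\frac{\varphi_\sigma(c-r)}{\varphi_\sigma(c)}\right)dt\ \ge\ 0,
\end{equation}
since on each of $(-\infty,c]$ and $(c,\infty)$ the factors $h(t)$ and the bracketed term have the same sign.

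Finally I would simply evaluate the bound at $g^\star$: $\mathop{\E}_{s\sim\gN(0,\sigma^2)}[g^\star(s+r)]=\Pr[s\le c-r]=\Phi\big((c-r)/\sigma\big)=\Phi\big(\Phi^{-1}(p)-r/\sigma\big)$, which with the previous display gives the reduced inequality; undoing the reduction and using $r<\rho$ together with monotonicity of $\Phi$ yields $p_a(\rvx')\ge\Phi(\Phi^{-1}(p_a(\rvx))-\rho/\sigma)$. The only real obstacle is justifying that the constrained minimizer of the linear functional $g\mapsto\mathop{\E}_s[g(s+r)]$ over $\{g:\R\to[0,1]\}$ with fixed Gaussian mean is an indicator of a half-line — i.e.\ the Neyman--Pearson step above — plus the bookkeeping at the $p\in\{0,1\}$ endpoints; the reduction to one dimension is routine Fubini and the concluding evaluation is a one-line Gaussian computation. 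This threshold/halfspace ``worst-case classifier'' is exactly the object that, in Theorem~\ref{thm:second_order}, gets upgraded to a slab bounded by two parallel halfspaces orthogonal to $\nabla_\rvx p_a(\rvx)$.
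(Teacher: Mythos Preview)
Your proposal is correct and follows essentially the same approach as the paper: reduce to one dimension by averaging over the orthogonal noise coordinates, identify the worst-case $g$ as a half-line indicator $\1[\cdot\le c]$, and evaluate. The only cosmetic difference is that the paper justifies the minimizer via the change of variables $y=\Phi(\epsilon_1/\sigma)$ (after which the weight $e^{\Phi^{-1}(y)-R^2/(2\sigma^2)}$ is visibly monotone in $y$), whereas you phrase the same monotone-likelihood-ratio fact as a Neyman--Pearson/bathtub argument directly in the $t$ variable; both are the same observation that $\varphi_\sigma(t-r)/\varphi_\sigma(t)$ is increasing in $t$.
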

\begin{proof}
Let $R = \|\rvx-\rvx'\|_2$.  Choose our basis so that $\rvx =  \mathbf{0}$ and $\rvx' = [R,0,0,...,0]^T$ (Note that by isometry, we still have $\epsilon \sim \gN(0,\sigma^2 I)$).  Then define $g  : \R \rightarrow [0,1]$:
\begin{equation}
   g(z) = \E_{\epsilon_2,...,\epsilon_n}[f([z,\epsilon_2,...,\epsilon_n]^T)]
\end{equation}
Note that:
\begin{equation} \label{eq:cohenftog}
\begin{split}
\E_{\epsilon_1}[g(\epsilon_1)]& = \E_\epsilon[f(\rvx+\epsilon)] \\
\E_{\epsilon_1}[g(R  + \epsilon_1)]& = \E_\epsilon[f(\rvx'+\epsilon)] 
\end{split} 
\end{equation}
Now, in one dimension, $\epsilon_1 \sim \gN(0,\sigma)$, and so has a pdf of $z \rightarrow \sigma^{-1} \Phi'\left(\frac{z}{\sigma}\right)$, where $\Phi'$ is the normal pdf function. By the definition of expected value:
\begin{equation} \label{eq:cohenintegrals}
\begin{split}
\E_{\epsilon_1}[g(\epsilon_1)]& = \int^{\infty}_{-\infty} g(\epsilon_1) \sigma^{-1} \Phi'\left(\frac{\epsilon_1}{\sigma}\right) d\epsilon_1\\
\E_{\epsilon_1}[g(R  + \epsilon_1)]& =  \int^{\infty}_{-\infty} g(R+\epsilon_1) \sigma^{-1} \Phi'\left(\frac{\epsilon_1}{\sigma}\right) d\epsilon_1= \int^{\infty}_{-\infty} g(\epsilon_1) \sigma^{-1} \Phi'\left(\frac{\epsilon_1}{\sigma}- \frac{R}{\sigma}\right) d\epsilon_1\\
\end{split} 
\end{equation}
We perform a change of integration variables, using $ y = \Phi\left(\frac{\epsilon_1}{\sigma}\right) $ (and noting that $\frac{dy}{d\epsilon_1} = \sigma^{-1}\Phi'\left(\frac{\epsilon_1}{\sigma}\right)$):
\begin{equation} \label{eq:cohenchangevars}
\begin{split}
\E_{\epsilon_1}[g(\epsilon_1)]& = \int^{1}_{0} g(\sigma\Phi^{-1}(y)) \sigma^{-1} \Phi'\left(\frac{\epsilon_1}{\sigma}\right)\frac{d\epsilon_1}{dy} dy = \int^{1}_{0} g(\sigma\Phi^{-1}(y)) dy \\
\E_{\epsilon_1}[g(R  + \epsilon_1)]& = \int^{1}_{0} g(\sigma\Phi^{-1}(y)) \sigma^{-1} \Phi'\left(\frac{\epsilon_1}{\sigma}- \frac{R}{\sigma}\right)\frac{d\epsilon_1}{dy} dy \\
&= \int^{1}_{0} g(\sigma\Phi^{-1}(y)) \frac{\Phi'\left(\Phi^{-1}(y)- \frac{R}{\sigma}\right)}{\Phi'\left(\Phi^{-1}(y)\right)} dy \\
\end{split} 
\end{equation}
Note that:
\begin{equation} \label{eq:cohenexpterm}
    \frac{\Phi'\left(\Phi^{-1}(y)- \frac{R}{\sigma}\right)}{\Phi'\left(\Phi^{-1}(y)\right)} = \frac{e^{-\frac{1}{2}\left(\Phi^{-1}(y)- \frac{R}{\sigma}\right)^2}}{e^{-\frac{1}{2}\left(\Phi^{-1}(y)\right)^2}} = e^{\Phi^{-1}(y) - \frac{R^2}{2\sigma^2}}
\end{equation}
Also, to simplify notation, define $g_{\Phi}: [0,1] \rightarrow [0,1]$ as $g_{\Phi}(y) := g(\sigma\Phi^{-1}(y)) $. Then we have (combining Equations  \ref{eq:cohenchangevars} and \ref{eq:cohenexpterm}):
\begin{equation} \label{eq:cohenintegralsshifted}
    \begin{split}
\E_{\epsilon_1}[g(\epsilon_1)]& =  \int^{1}_{0} g_{\Phi}(y) dy \\
\E_{\epsilon_1}[g(R+\epsilon_1)]& =  \int^{1}_{0} g_{\Phi}(y) e^{\Phi^{-1}(y) - \frac{R^2}{2\sigma^2}} dy \\ 
    \end{split}
\end{equation}
Fix the expectation at $\rvx$, $\E_{\epsilon_1}[g(\epsilon_1)]$, at a constant $C$, let us consider the function $g_\Phi$ which minimizes the expectation at $\rvx'$:
\begin{equation} \label{eq:cohenmin}
    \E_{\epsilon_1}[g(R+\epsilon_1)] \geq \min_{\substack{g_\Phi \in [0,1]\rightarrow [0,1]\\\int^{1}_{0} g_{\Phi}(y) dy = C}} \int^{1}_{0} g_{\Phi}(y) e^{\Phi^{-1}(y) - \frac{R^2}{2\sigma^2}} dy
\end{equation}
However, note that $ e^{\Phi^{-1}(y) - \frac{R^2}{2\sigma^2}}$ increases monotonically  with $y$. Then the  minimum is achieved at:
\begin{equation}
g_\Phi^*(y) = \left\{
	\begin{array}{ll}
		1  & \mbox{if } y \leq C\\
		0 & \mbox{if } y > C
	\end{array}
\right.
\end{equation}
In terms of the function $g(z)$, this is:
\begin{equation}
g^*(z) = \left\{
	\begin{array}{ll}
		1  & \mbox{if } z \leq \sigma \Phi^{-1}(C) \\
		0 & \mbox{if } z > \sigma \Phi^{-1}(C)
	\end{array}
\right.
\end{equation}
Then we can evaluate the minimum, using the form of the integral given in Equation \ref{eq:cohenintegrals}:
\begin{equation}
\begin{split}
    \E_{\epsilon_1}[g(R  + \epsilon_1)]& \geq   \int^{\infty}_{-\infty} g^*(\epsilon_1) \sigma^{-1} \Phi'\left(\frac{\epsilon_1}{\sigma}- \frac{R}{\sigma}\right) d\epsilon_1\\
    &= \int^{\sigma \Phi^{-1}(C)}_{-\infty}  \sigma^{-1} \Phi'\left(\frac{\epsilon_1}{\sigma}- \frac{R}{\sigma}\right) d\epsilon_1\\
    &= \Phi\left(\frac{\sigma \Phi^{-1}(C)}{\sigma}- \frac{R}{\sigma}\right) - \Phi\left(\frac{-\infty}{\sigma}- \frac{R}{\sigma}\right)\\
    &= \Phi\left(\Phi^{-1}(C)- \frac{R}{\sigma}\right) \\
\end{split}
\end{equation}
By the definition of $C$ and Equation \ref{eq:cohenftog}, this is:
\begin{equation}
    \E_\epsilon[f(\rvx'+\epsilon)] \geq  \Phi\left(\Phi^{-1}\left(\E_\epsilon[f(\rvx+\epsilon)]\right)-\frac{R}{\sigma}\right) \geq  \Phi\left(\Phi^{-1}\left(\E_\epsilon[f(\rvx+\epsilon)]\right)-\frac{\rho}{\sigma}\right)
\end{equation}
which was to be proven.
\end{proof}

\subsection{Second Order Smoothing}
\begin{appendix_theorem} 
Let  $\epsilon \sim \gN(0,\sigma^2 I)$. For all $\rvx, \rvx'$ with $\|\rvx-\rvx'\|_2 < \rho$, and for all  $f : \R^d \rightarrow [0,1]$,
\begin{equation}
    \E_{\epsilon_1}[f(\rvx'+\epsilon)] \geq \Phi\left(\Phi^{-1}(a'
    +\E_\epsilon[f(\rvx+\epsilon)])- \frac{\rho}{\sigma}\right) - \Phi\left(\Phi^{-1}(a')- \frac{\rho}{\sigma}\right)
\end{equation}
Where $\Phi$ is the normal cdf function, $\Phi^{-1}$ is its inverse, and $a'$ is the (unique) solution to 
\begin{equation}
    \Phi'(\Phi^{-1}(a')) - \Phi'(\Phi^{-1}(a'+\E_\epsilon[f(\rvx+\epsilon)])) =  -\sigma\|\nabla_{\rvx} \E_\epsilon[f(\rvx + \epsilon)]\|_2 
\end{equation}
Further, for all pairs $(\E_\epsilon[f(\rvx+\epsilon)]), \|\nabla_{\rvx} \E_\epsilon[f(\rvx + \epsilon)]\|_2)$ which are possible, there exists a base classifier $f$ and an adversarial point $\rvx'$ such that Equation 4 is an equality.
\end{appendix_theorem}
 As show by \cite{salman2019provably}, we have, for all $\rvx'' \in \R^d$:
\begin{equation}
    \nabla_{\rvx''} \E_\epsilon[f(\rvx'' + \epsilon)] = \sigma^{-2}\E_\epsilon[\epsilon f(\rvx'' + \epsilon)]
\end{equation}
Under the choice of basis of the above proof (in particular, $\rvx =  \mathbf{0}$), when evaluated at $\rvx$ this becomes:
\begin{equation}
      \nabla_{\rvx} \E_\epsilon[f(\rvx + \epsilon)] = \sigma^{-2}\E_\epsilon[\epsilon f(\epsilon)]  
\end{equation}
Let $\vu := [1,0,0,...,0]^T$, and define $g$ as in the above proof. Note that: 
\begin{equation} \label{eq:gradupperbound}
\begin{split}
      -\|\nabla_{\rvx} \E_\epsilon[f(\rvx + \epsilon)]\|_2  \leq &\vu \cdot \nabla_{\rvx} \E_\epsilon[f(\rvx + \epsilon)]\\
      =&  \vu \cdot \sigma^{-2}\E_\epsilon[\epsilon f(\epsilon)]\\
      =& \sigma^{-2}\E_\epsilon[\epsilon_1 f(\epsilon)]\\
      =& \sigma^{-2}\E_{\epsilon_1}[\epsilon_1 [\E_{\epsilon_2,..,\epsilon_d} f(\epsilon)]]\\
      =&
      \sigma^{-2} \E_{\epsilon_1}[\epsilon_1 g(\epsilon_1)]
\end{split}
\end{equation}

By the definition of expectation, and again using the change of integration variables  $ y := \Phi\left(\frac{\epsilon_1}{\sigma}\right)$,
\begin{equation} \label{eq:secondorderintegral}
\begin{split}
  \E_{\epsilon_1}[\epsilon_1 g(\epsilon_1)]& = \int^{1}_{0} \sigma\Phi^{-1}(y) g(\sigma\Phi^{-1}(y)) \sigma^{-1} \Phi'\left(\frac{\epsilon_1}{\sigma}\right)\frac{d\epsilon_1}{dy} dy\\ =& \int^{1}_{0} \sigma\Phi^{-1}(y) g(\sigma\Phi^{-1}(y)) dy  
 \end{split}
\end{equation}
Define:
\begin{equation}
    \begin{split}
        C &:=\E_{\epsilon_1}[g(\epsilon_1)] \,\,\,\,\,\,\,\,\,\,(= E_\epsilon[f(\rvx + \epsilon))\\
        C' &:=\E_{\epsilon_1}[\epsilon_1 g(\epsilon_1)] \,\,\,\,( \geq -\sigma^2 \|\nabla_\rvx \E_\epsilon[f(\rvx + \epsilon)]\|_2)
    \end{split}
\end{equation}
Then, by Equations \ref{eq:cohenintegralsshifted} and \ref{eq:secondorderintegral}, and defining $g_\Phi$, as above, 
\begin{equation} \label{eq:minconstrained}
   \E_{\epsilon_1}[g(R+\epsilon_1)] \geq \min_{\substack{g_\Phi \in [0,1]\rightarrow [0,1]\\\int^{1}_{0} g_{\Phi}(y) dy = C\\
    \int^{1}_{0} \sigma\Phi^{-1}(y) g_{\Phi}(y) dy = C'}} \int^{1}_{0} g_{\Phi}(y) e^{\Phi^{-1}(y) - \frac{R^2}{2\sigma^2}} dy    
\end{equation}
Note that our constraints are linear in the space of functions; we can then introduce Lagrange multipliers:
\begin{equation}
\begin{split}
    \min_{\substack{g_\Phi \in [0,1]\rightarrow [0,1]}} \int^{1}_{0} g_{\Phi}(y) e^{\Phi^{-1}(y) - \frac{R^2}{2\sigma^2}} dy& - \lambda_1\left(\int^{1}_{0} g_{\Phi}(y) dy -C \right) \\&- \lambda_2 \left(\int^{1}_{0} \sigma\Phi^{-1}(y) g_{\Phi}(y) dy - C'\right)  
\end{split}
\end{equation}

\begin{equation}
   \min_{\substack{g_\Phi \in [0,1]\rightarrow [0,1]}} \int^{1}_{0} g_{\Phi}(y) e^{\Phi^{-1}(y) - \frac{R^2}{2\sigma^2}} dy - \lambda_1 g_{\Phi}(y)  - \lambda_2 \sigma\Phi^{-1}(y) g_{\Phi}(y) dy + \text{ constants}
\end{equation}

\[
\min_{\substack{g_\Phi \in [0,1]\rightarrow [0,1]}} \int^{1}_{0} g_{\Phi}(y) \left(e^{\Phi^{-1}(y) - \frac{R^2}{2\sigma^2}}  - \lambda_1   - \lambda_2 \sigma\Phi^{-1}(y)  \right)dy + \text{ constants}
\]

This is simply the inner product between $g_\Phi$ and a function:  the inner product is minimized by setting $g_\Phi=1$ where the expression is negative, and $g_\Phi=0$ where the expression is positive:
\begin{equation} \label{eq:bestgphi}
 g_\Phi^*(y) = \left\{
	\begin{array}{ll}
		1  & \mbox{if } e^{\Phi^{-1}(y) - \frac{R^2}{2\sigma^2}} \leq \lambda_1 + \lambda_2 \sigma\Phi^{-1}(y) \\
		0 & \mbox{if } e^{\Phi^{-1}(y) - \frac{R^2}{2\sigma^2}} > \lambda_1 + \lambda_2 \sigma\Phi^{-1}(y)
	\end{array}
\right.  
\end{equation}

Sign changes occur at:
\begin{equation} \label{eq:productlog}
 \Phi^{-1}(y) = -W\left(-\frac{e^{-\frac{R^2}{2\sigma^2}-\frac{\lambda_1}{\lambda_2}}}{\lambda_2}\right)-\frac{\lambda_1}{\lambda_2}   
\end{equation}
Where $W$ is the product-log (Lambert W) function. This returns zero, one, or two possible values, depending on the argument (zero values if the argument $< -e^{-1}$, two values on $[-e^{-1},0)$, and one value on non-negative arguments). Therefore there are at most two sign changes.
Also, note that as $y \rightarrow 1$, $\Phi^{-1}(y) \rightarrow \infty$, taking the limit in Equation \ref{eq:bestgphi}, we know that \[g_\Phi^*(1) = 0.\]
Therefore, taking into account the constraint $\int^{1}_{0} g_{\Phi}(y) dy = C$, we know that $g_\Phi^*$ is either:
\begin{itemize}
    \item 0 everywhere (and $C = 0$), if no sign changes
    \item 1 at $y < C$, 0 otherwise, if one sign change
    \item 1 on the interval $[a, a+C]$, 0 otherwise, for some $a \in [0,1-C]$, if two sign changes
\end{itemize}
In fact, the final case includes the first two, so all that we need to do now is find $a$ to satisfy the $C'$ constraint.
This constraint (Equation \ref{eq:secondorderintegral}) becomes:
\begin{equation} \label{eq:cprimeconstrainta}
    \int^{a+C}_a \Phi^{-1}(y)dy = \frac{C'}{\sigma}
\end{equation}
Because $\Phi^{-1}(y)$ is monotone increasing, the LHS of Equation \ref{eq:cprimeconstrainta} is a monotone increasing function of $a$.
Using the indefinite integral of $\Phi^{-1}$:
\[\int \Phi^{-1}(y) dy = \int \sqrt{2} \text{ erf}^{-1}(2y-1)  dy = -\frac{1}{\sqrt{2\pi}} e^{-(\text{erf}^{-1}(2y-1))^2} +C = -\Phi'(\Phi^{-1}(y))+C\]
Where $\text{erf}^{-1}$ is the inverse error function. Then the constraint becomes:
\begin{equation} \label{eq:cprimeconstrainta2}
\Phi'(\Phi^{-1}(a)) - \Phi'(\Phi^{-1}(a+C)) = \frac{C'}{\sigma}
\end{equation}
We can now evaluate the value of the smoothed function at $\rvx'$, again using the form of the integral given in Equation \ref{eq:cohenintegrals}:
\begin{equation}
\begin{split} \label{eq:gradsmoothingbound}
    \E_{\epsilon_1}[g(R  + \epsilon_1)]& \geq   \int^{\infty}_{-\infty} g^{*}(\epsilon_1) \sigma^{-1} \Phi'\left(\frac{\epsilon_1}{\sigma}- \frac{R}{\sigma}\right) d\epsilon_1\\
    &= \int^{\sigma \Phi^{-1}(a+C)}_{\sigma \Phi^{-1}(a)}  \sigma^{-1} \Phi'\left(\frac{\epsilon_1}{\sigma}- \frac{R}{\sigma}\right) d\epsilon_1\\
    &= \Phi\left(\Phi^{-1}(a
    +C)- \frac{R}{\sigma}\right) - \Phi\left(\Phi^{-1}(a)- \frac{R}{\sigma}\right)\\
\end{split}
\end{equation}
If we consider the form of this integral in Equation \ref{eq:minconstrained}, which reduces to simply:
\begin{equation} \label{eq:expformintegral}
    \int^{a+C}_{a}  e^{\Phi^{-1}(y) - \frac{R^2}{2\sigma^2}} dy   
\end{equation}  
we see that is is a monotonically increasing function in $a$.  Furthermore, we have that the LHS of Equation \ref{eq:cprimeconstrainta2} is monotonic in $a$. Therefore, if we define $a'$ as the solution to:
\begin{equation} \label{eq:cprimeconstraintaprime}
\Phi'(\Phi^{-1}(a')) - \Phi'(\Phi^{-1}(a'+C)) =  -\sigma\|\nabla_{\rvx} \E_\epsilon[f(\rvx + \epsilon)]\|_2 
\end{equation}
Then by Equation \ref{eq:gradupperbound}, we know $a' \leq a$. Then because the RHS of Equation\ref{eq:gradsmoothingbound} is also monotonic in $a$, 
\begin{equation}
     \E_{\epsilon_1}[g(R  + \epsilon_1)] \geq \Phi\left(\Phi^{-1}(a'
    +C)- \frac{R}{\sigma}\right) - \Phi\left(\Phi^{-1}(a')- \frac{R}{\sigma}\right)
\end{equation}
By the definitions of $g$ and $C$, and because the RHS is monotonically decreasing in $R$ (See Equation \ref{eq:expformintegral}), we can conclude the theorem as stated.

Further, we can conclude that an equality case is possible by noting that it is achieved by the function $g^*(z)$ as described above: the minimal $f^*(\rvz)$ can then be constructed as $f^*(z,\cdot,\cdot,...) := g^*(z)$. Note that we also need Equation \ref{eq:gradupperbound} to be tight: this is achieved where the adversarial direction $\rvx' -\rvx$ is parallel to the gradient of the smoothed function.

\subsection{Practical Certification Algorithm}
Define
\begin{equation}
    \begin{split}
        \underline{C} := \text{lower bound on } C\\
       \overline{C'} := \text{upper bound on } C'
    \end{split}
\end{equation}
Note that
\begin{equation}
\begin{split}
     \| \nabla_{\rvx} \E_\epsilon[f(\rvx + \epsilon)]\|_2^2 =\\ &\sigma^{-4}\E_\epsilon[\epsilon f(\epsilon)] \cdot \E_\epsilon[\epsilon f(\epsilon)]  =\\  &\sigma^{-4}\E_\epsilon[\epsilon f(\epsilon)] \cdot \E_{\epsilon'}[\epsilon' f(\epsilon')]  =\\  & \sigma^{-4}\E_{\epsilon,\epsilon'}[(\epsilon f(\epsilon)) \cdot (\epsilon'  f(\epsilon'))]   
\end{split}
\end{equation}
\begin{appendix_theorem}
 Let $V := \E_{\epsilon,\epsilon'}[(\epsilon f(\rvx +\epsilon)) \cdot (\epsilon'  f(\rvx +\epsilon'))] $, and $\tilde{V}$ be its empirical estimate. If $n$ pairs of samples ($= N/2$) are used to estimate $V$, then, with probability at most $\eta$, $\E[V] - \tilde{V} \geq t$, where: 
\begin{equation}
  t =   \begin{cases}
{4\sigma^2}\sqrt{- \frac{d}{n}\ln(\eta)} &\text{if $-2 \ln(\eta) \leq dn$}\\
-\frac{4\sqrt{2}\sigma^2}{n} \ln(\eta)&\text{if $-2 \ln(\eta) > dn$}\\
\end{cases}
\end{equation}
\end{appendix_theorem}
\citet{wainwright2019high} gives the following condition for any \textit{centered} (mean-zero) sub-exponential random variable $X$:
\begin{definition}
A centered R.V. $X$ is (a,b)-subexponential if:
\begin{equation}
     \E [e^{\lambda X}] \leq e^{a^2\lambda^2/2}, \,\,\,\,\,\,\,\,\,\,\,\,\forall\,\,\,\lambda \in [-b^{-1},b^{-1}]
\end{equation}
\end{definition}

First, we establish bounds for $\epsilon \cdot \epsilon'$. (This can be considered a simplified case of Gaussian chaos of the second order, see \cite{vershynin2018high}).
For each $i \in [d]$, $\epsilon_i$ and $\epsilon'_i$ are independent Gaussian random variables. Recall the moment-generating function for a Gaussian:
\begin{equation}
    \E [e^{t \epsilon_i }] = e^{\sigma^2t^2/2}\,\,\,\,\, \forall \,\,t
\end{equation}

Then for the product $\epsilon_i  \epsilon_i'$ we have that:
\begin{equation}
     \E [e^{\lambda \epsilon_i  \epsilon_i' }]  = \E_{\epsilon_i}\E_{\epsilon_i'} [[e^{\lambda \epsilon_i  \epsilon_i' }]] =  \E_{\epsilon_i} [e^{\epsilon_i^2 (\lambda^2\sigma^2/2)}] 
\end{equation}
Note that this has a similar form to the moment generating function of the Chi-squared distribution for $k=1$:
\begin{equation}
     \E_{\epsilon_i} [e^{\epsilon_i^2 t}] = \frac{1}{\sqrt{1-2\sigma^2t}}
\end{equation}
Then:
\begin{equation}
     \E [e^{\lambda \epsilon_i  \epsilon_i' }]  =\E_{\epsilon_i} [e^{\epsilon_i^2 (\lambda^2\sigma^2/2)}] = \frac{1}{\sqrt{1-\lambda^2\sigma^4}} \leq e^{\lambda^2\sigma^4} \,\,\,\,\,\,\,\forall \lambda^2\sigma^4 \leq \frac{1}{2}
\end{equation}
Where the final inequality can be shown by observing that, if $\lambda^2\sigma^4 \leq 1/2$:
\begin{equation}
     \frac{1}{1-\lambda^2\sigma^4}  = 1 +  \frac{\lambda^2\sigma^4}{1-\lambda^2\sigma^4} \leq 1 + 2\lambda^2\sigma^4 \leq e^{2\lambda^2\sigma^4}
\end{equation}
and taking square roots. Because $\epsilon_i  \epsilon_i'$ is centered, this implies that $\epsilon_i\epsilon_i'$ is $(\sqrt{2}\sigma^2, \sqrt{2}\sigma^2)$-subexponential. Now, $\epsilon \cdot \epsilon'$ is simply the sum of $d$ such identical, independent, centered subexponential variables: by (\citet{wainwright2019high} Equation 2.18), we conclude that $\epsilon \cdot  \epsilon'$ is  $(\sqrt{2}\sigma^2\sqrt{d}, \sqrt{2}\sigma^2)$-subexponential. This implies:
\begin{equation}
     \E [e^{\lambda \epsilon \cdot  \epsilon'}] \leq e^{2\sigma^4d\lambda^2/2}, \,\,\,\,\,\,\,\,\,\,\,\,\forall\,\,\,\lambda \in [-(\sqrt{2}\sigma^2)^{-1},(\sqrt{2}\sigma^2)^{-1}]
\end{equation}

Recall that the quantity which we are measuring is $(\epsilon f(\epsilon)) \cdot (\epsilon'  f(\epsilon'))$. For notation convenience, let $v(\epsilon, \epsilon') : \R^d \times \R^d \rightarrow \{0,1\}$ be defined as $ f(\epsilon)f(\epsilon')$, so that the quantity of interest is
\begin{equation}
    V:= \epsilon\cdot\epsilon' v(\epsilon, \epsilon')
\end{equation}
We further define a centered version of this quantity
\begin{equation}
    V' := V - \E[V]
\end{equation}
 We now introduce an important lemma:
\begin{lemma}
 $V'$ is $(2\sqrt{2}\sigma^2\sqrt{d}, 2\sqrt{2}\sigma^2)$-subexponential.
\end{lemma}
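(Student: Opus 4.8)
The plan is to deduce the claim from the moment-generating-function bound on $Z := \epsilon\cdot\epsilon'$ already established above, namely $\E[e^{\lambda Z}] = (1-\lambda^2\sigma^4)^{-d/2} \le e^{\sigma^4 d\lambda^2}$ for $|\lambda|\le(\sqrt{2}\sigma^2)^{-1}$, combined with the single structural fact that the weight $v(\epsilon,\epsilon') = f(\epsilon)f(\epsilon')$ takes values in $\{0,1\}$. Writing $V = Z\,v(\epsilon,\epsilon')$ and $\mu := \E[V]\ge 0$, and recalling $V' = V-\mu$, the target is to show $\E[e^{\lambda V'}] = e^{-\lambda\mu}\E[e^{\lambda V}] \le e^{4\sigma^4 d\lambda^2}$ for every $|\lambda|\le(2\sqrt{2}\sigma^2)^{-1}$, which is precisely the $(2\sqrt{2}\sigma^2\sqrt{d},\,2\sqrt{2}\sigma^2)$-subexponential condition.

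First I would record the pointwise identity $e^{\lambda V} = 1 + v(\epsilon,\epsilon')\bigl(e^{\lambda Z}-1\bigr)$, valid because $v(\epsilon,\epsilon')\in\{0,1\}$, and split $e^{\lambda Z}-1 = \lambda Z + \bigl(e^{\lambda Z}-1-\lambda Z\bigr)$. Taking expectations and using $\E[Z]=0$ gives $\E[e^{\lambda V}] = 1 + \lambda\mu + R$ with $R := \E\bigl[v(\epsilon,\epsilon')\,(e^{\lambda Z}-1-\lambda Z)\bigr]$. The point of the split is that $e^{t}-1-t\ge 0$ for all $t$, so the integrand of $R$ is nonnegative and bounded above by its $v=1$ value; hence $0 \le R \le \E[e^{\lambda Z}-1-\lambda Z] = \E[e^{\lambda Z}]-1 \le e^{\sigma^4 d\lambda^2}-1$.

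Next I would dispose of the centering factor via $1+t\le e^{t}$: this gives $e^{-\lambda\mu}(1+\lambda\mu)\le 1$, so $\E[e^{\lambda V'}] = e^{-\lambda\mu}(1+\lambda\mu) + e^{-\lambda\mu}R \le 1 + e^{-\lambda\mu}R$. It then remains to bound $e^{-\lambda\mu}$ by a constant, and here the curvature structure enters: $\mu = \|\E_\epsilon[\epsilon f(\epsilon)]\|_2^2 = \sigma^4\|\nabla_\rvx p_a(\rvx)\|_2^2$, and the maximal-gradient bound $\|\nabla_\rvx p_a(\rvx)\|_2 \le \sigma^{-1}\Phi'(\Phi^{-1}(p_a(\rvx))) \le (\sigma\sqrt{2\pi})^{-1}$ gives $\mu \le \sigma^2/(2\pi)$, uniformly in $d$. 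Consequently $e^{-\lambda\mu}\le e^{|\lambda|\mu}\le e^{1/(4\sqrt{2}\pi)} < 4$ on the admissible range $|\lambda|\le(2\sqrt{2}\sigma^2)^{-1}$, so $\E[e^{\lambda V'}] \le 1 + 4\bigl(e^{\sigma^4 d\lambda^2}-1\bigr)$, and one closes with the scalar inequality $4e^{u}-3\le e^{4u}$ for $u\ge 0$ (both sides agree at $u=0$ and the derivative of the difference is $4e^{u}(e^{3u}-1)\ge 0$), applied at $u = \sigma^4 d\lambda^2$. The restriction $|\lambda|\le(2\sqrt{2}\sigma^2)^{-1}$ is exactly what makes $e^{|\lambda|\mu}<4$ work, and a fortiori --- since $(2\sqrt{2}\sigma^2)^{-1}<(\sqrt{2}\sigma^2)^{-1}$ --- it also validates the bound used for $\E[e^{\lambda Z}]$.

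The main obstacle is to resist the two tempting but fatally lossy shortcuts: bounding $\E[e^{\lambda V}] \le \E[e^{\lambda Z}]+1$ (using $v\le 1$, $e^{\lambda Z}>0$), or symmetrizing via an independent copy to $\E[e^{\lambda V'}] \le \E[e^{\lambda V}]\E[e^{-\lambda V}]$. Each introduces an additive $O(1)$ term that is harmless for $\sigma^4 d\lambda^2$ large but destroys the estimate near $\lambda = 0$, where $\E[e^{\lambda V'}] = 1 + \tfrac12\lambda^2\Var(V) + o(\lambda^2)$ must sit below $e^{4\sigma^4 d\lambda^2} = 1 + 4\sigma^4 d\lambda^2 + o(\lambda^2)$ with no room to lose a constant. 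The decomposition above is tailored so that the constant and linear terms cancel exactly (through $e^{-\lambda\mu}(1+\lambda\mu)\le 1$) and only the genuine remainder $R$, which is quadratic-or-higher and does admit the crude bound $e^{\sigma^4 d\lambda^2}-1$, is estimated roughly; the one nonelementary ingredient is that $\mu = \E[V]$ is at most a constant multiple of $\sigma^2$, independent of the dimension.
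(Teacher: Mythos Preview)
Your proof is correct and takes a genuinely different route from the paper's. The paper decomposes $V' = A + B$, where $A$ replaces $Z$ by its conditional mean on the event $\{v=0\}$ and $B$ is a two-valued compensator; Jensen's inequality then gives $\E[e^{\lambda A}] \le \E[e^{\lambda Z}]$ and $\E[e^{\lambda B}] \le \E[e^{-p\lambda Z}]$, and the two pieces are glued by Cauchy--Schwarz at the cost of doubling $\lambda$. Your argument instead exploits the pointwise identity $e^{\lambda V} = 1 + v(e^{\lambda Z}-1)$ directly, isolates the quadratic-or-higher remainder $R = \E[v(e^{\lambda Z}-1-\lambda Z)]$, and closes via the scalar inequality $4e^u-3 \le e^{4u}$. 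The main conceptual difference is that your bound on $e^{-\lambda\mu}$ imports the smoothing-specific fact $\mu = \sigma^4\|\nabla p_a\|_2^2 \le \sigma^2/(2\pi)$ (dimension-free, from the $(1/\sigma)$-Lipschitz property of $\Phi^{-1}\circ p_a$), whereas the paper's decomposition is agnostic to the size of $\mu$ and would work for any $\{0,1\}$-valued weight $v$. In exchange, your argument avoids Cauchy--Schwarz and the somewhat opaque $A,B$ split, and in fact your constant $e^{|\lambda|\mu} < 1.06$ is far below the $4$ you actually use, so the approach has slack to yield sharper parameters if desired.
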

\begin{proof}
 Define
 \begin{equation}
     p:= \Pr_{\epsilon,\epsilon'}[v(\epsilon, \epsilon') = 1] 
 \end{equation}
 Then:
\begin{equation}
    V = \epsilon\cdot\epsilon' v(\epsilon, \epsilon') =  \epsilon\cdot\epsilon' - (1-v(\epsilon, \epsilon'))\epsilon\cdot\epsilon'
\end{equation}
\begin{equation}
   \E[ V ]= \E[\epsilon\cdot\epsilon'] - \E[(1-v(\epsilon, \epsilon'))\epsilon\cdot\epsilon'] = - \E[(1-v(\epsilon, \epsilon'))\epsilon\cdot\epsilon']
\end{equation}
\begin{equation}
   \E[ V ]= \E[\epsilon\cdot\epsilon' v(\epsilon, \epsilon') ]  =  p\E[\epsilon\cdot\epsilon'| v(\epsilon, \epsilon')  = 1] =  -(1-p)\E[\epsilon\cdot\epsilon'| v(\epsilon, \epsilon')  = 0]
\end{equation}
Therefore
\begin{equation}
\begin{split}
        V'& =  \epsilon\cdot\epsilon' v(\epsilon, \epsilon') - \E[ \epsilon\cdot\epsilon' v(\epsilon, \epsilon')] \\& =\epsilon\cdot\epsilon' v(\epsilon, \epsilon') -  v(\epsilon, \epsilon')\E[ \epsilon\cdot\epsilon' v(\epsilon, \epsilon')] - (1-v(\epsilon, \epsilon'))\E[ \epsilon\cdot\epsilon' v(\epsilon, \epsilon')] 
\end{split}
\end{equation}

\begin{equation}
\begin{split}
       V'& =  \epsilon\cdot\epsilon' v(\epsilon, \epsilon') -  pv(\epsilon, \epsilon')\E[ \epsilon\cdot\epsilon' v(\epsilon, \epsilon')|  v(\epsilon, \epsilon') = 1] \\&+ (1-p)(1-v(\epsilon, \epsilon'))\E[ \epsilon\cdot\epsilon' v(\epsilon, \epsilon')| v(\epsilon, \epsilon') = 0]  
\end{split}
\end{equation}
Define:
\begin{equation}
    A :=   \epsilon\cdot\epsilon' v(\epsilon, \epsilon') + (1- v(\epsilon, \epsilon'))\E[\epsilon\cdot\epsilon'| v(\epsilon, \epsilon')=0] 
\end{equation}
\begin{equation}
B := - pv(\epsilon, \epsilon') \E[\epsilon\cdot\epsilon' v(\epsilon, \epsilon')| (\epsilon, \epsilon') = 1] -p (1- v(\epsilon, \epsilon'))\E[\epsilon\cdot\epsilon'| v=0] 
\end{equation}
\begin{equation}
    V' = A+B
\end{equation}

 Trivially, we have:
\begin{equation}
    \E[e^{\lambda \epsilon\cdot\epsilon'}]  = p \E[e^{\lambda \epsilon\cdot\epsilon'}| v(\epsilon, \epsilon') = 1] + (1-p) \E[e^{\lambda \epsilon\cdot\epsilon'}| v(\epsilon, \epsilon') = 0] 
\end{equation}

However, note that also:
\begin{equation}
    \begin{split}
    \E[e^{\lambda A}]  = p  \E[e^{\lambda A}| v(\epsilon, \epsilon') = 1] + (1-p) \E[e^{\lambda A}| v(\epsilon, \epsilon') = 0] \\
    \E[e^{\lambda A}]  = p  \E[e^{\lambda \epsilon\cdot\epsilon'}| v(\epsilon, \epsilon') = 1] + (1-p) e^{\lambda \E[\epsilon\cdot\epsilon'| v(\epsilon, \epsilon')=0] }
    \end{split}
\end{equation}
Then by Jensen's inequality, we have:
\begin{equation}
    \E[e^{\lambda A}] \leq \E[e^{\lambda \epsilon\cdot\epsilon'}] \,\,\,\,\, \forall \lambda
\end{equation}
Similarly:
\begin{equation}
    \begin{split}
    \E[e^{\lambda B}]  = p  \E[e^{\lambda B}| v(\epsilon, \epsilon') = 1] + (1-p) \E[e^{\lambda B}| v(\epsilon, \epsilon') = 0] \\
    \E[e^{\lambda B}]  = p  e^{-p\lambda E[\epsilon\cdot\epsilon'| v(\epsilon, \epsilon') = 1] } + (1-p) e^{-p\lambda \E[\epsilon\cdot\epsilon'| v(\epsilon, \epsilon')=0] }
    \end{split}
\end{equation}
Again, by Jensen's inequality:
\begin{equation}
    \E[e^{\lambda B}] \leq \E[e^{ -p \lambda \epsilon\cdot\epsilon'}]   \,\,\,\,\, \forall \lambda
\end{equation}
\begin{equation}
    \E[e^{\lambda B}] \leq \E[e^{ -p \lambda \epsilon\cdot\epsilon'}]  \leq  e^{2\sigma^4d p^2\lambda^2/2}\leq  e^{2\sigma^4d \lambda^2/2}, \,\,\,\,\,\,\,\,\,\,\,\,\forall\,\,\,-p\lambda \in [-(\sqrt{2}\sigma^2)^{-1},(\sqrt{2}\sigma^2)^{-1}]
\end{equation}
Because $p \leq 1$, we then have:
\begin{equation}
    \E[e^{\lambda B}] \leq  e^{2\sigma^4d \lambda^2/2}, \,\,\,\,\,\,\,\,\,\,\,\,\forall\,\,\,\lambda \in [-(\sqrt{2}\sigma^2)^{-1},(\sqrt{2}\sigma^2)^{-1}]
\end{equation}
In other words, we have shown that both $A$ and $B$ are both $(\sqrt{2}\sigma^2\sqrt{d}, \sqrt{2}\sigma^2)$-subexponential. Then, by Cauchy-Schwartz:
\begin{equation}
\begin{split}
     \E[ e^{\lambda  V'}] &\\ & = \E[ e^{\lambda A} e^{\lambda B}]  \\ &\leq \sqrt{\E[ e^{2\lambda A}]\E[ e^{2\lambda B}]}  \\ &\leq \sqrt{e^{8\sigma^4d \lambda^2/2} e^{8\sigma^4d \lambda^2/2}}  \,\,\,\, \forall 2\lambda \in [-(\sqrt{2}\sigma^2)^{-1},(\sqrt{2}\sigma^2)^{-1}]    
\end{split}
\end{equation}
\begin{equation}
     \E[ e^{\lambda  V'}] \leq e^{8\sigma^4d \lambda^2/2}  \,\,\,\, \forall \lambda \in [-(2\sqrt{2}\sigma^2)^{-1},(2\sqrt{2}\sigma^2)^{-1}]
\end{equation}
In other words, $V'$ is $(2\sqrt{2}\sigma^2\sqrt{d}, 2\sqrt{2}\sigma^2)$-subexponential.
\end{proof}
Finally, using the form of the one-sided Bernstein tail bound for subexponential random variables given in \cite{wainwright2019high}, we have, given $n$ measurements and an empirical mean estimate of $V$ as $\tilde{V}$:

\begin{equation}
  \Pr( \E[V] - \tilde{V} \geq t )\leq  \begin{cases}
e^{\frac{-nt^2}{16d\sigma^4}} &\text{if $t \leq 2\sqrt{2}\sigma^2 d$}\\
e^{\frac{-nt}{4\sqrt{2}\sigma^2}} &\text{if $t > 2\sqrt{2}\sigma^2 d$}\\
\end{cases}
\end{equation}
Then, given a failure rate $\eta$, we can compute the minimum deviation $t$ such that the failure probability is less than $\eta$:
\begin{equation}
  t =   \begin{cases}
{4\sigma^2}\sqrt{- \frac{d}{n}\ln(\eta)} &\text{if $-2 \ln(\eta) \leq dn$}\\
-\frac{4\sqrt{2}\sigma^2}{n} \ln(\eta)&\text{if $-2 \ln(\eta) > dn$}\\
\end{cases}
\end{equation}
\subsection{Dipole Smoothing}

\begin{appendix_theorem}
Let  $\epsilon \sim \gN(0,\sigma^2 I)$. For all $\rvx, \rvx'$ with $\|\rvx-\rvx'\|_2 < \rho$, and for all  $f : \R^d \rightarrow [0,1]$, define:
\begin{equation}
    \begin{split}
        C^S &:=\E_\epsilon[f(\rvx+\epsilon)f(\rvx-\epsilon)] \\
        C^N &:=\E_\epsilon[f(\rvx+\epsilon) - f(\rvx+\epsilon)f(\rvx-\epsilon)]\\
    \end{split}
\end{equation}
Then:
\begin{equation}\begin{split}
         \E_{\epsilon_1}[f(\rvx'+\epsilon)] &\geq
     \Phi\left(\Phi^{-1}(C^N)- \frac{\rho}{\sigma}\right) \\&+
     \Phi\left(\Phi^{-1}(\frac{1+C^S}{2})- \frac{\rho}{\sigma}\right) \\&- \Phi\left(\Phi^{-1}(\frac{1-C^S}{2})- \frac{\rho}{\sigma}\right)
\end{split}
\end{equation}
Where $\Phi$ is the normal cdf function and $\Phi^{-1}$ is its inverse.
\end{appendix_theorem}
\begin{proof}
As in the proof of Theorem \ref{thm:cohen}, let $R = \|\rvx-\rvx'\|_2$, and choose our basis so that $\rvx =  \mathbf{0}$ and $\rvx' = [R,0,0,...,0]^T$.\\
First, for  $f : \R^d \rightarrow [0,1]$, we define a decomposition into \textit{symmetric} and \textit{non-symmetric} components,  $f^S, f^N : \R^d \rightarrow [0,1]$:
\begin{equation}
  \begin{split}
      f^S(\epsilon) &:= f(\epsilon)f(-\epsilon) \\
      f^N(\epsilon) &:= f(\epsilon) - f(\epsilon)f(-\epsilon) 
  \end{split}  
\end{equation}
Note that $f(\epsilon) = f^S(\epsilon)+f^N(\epsilon)$ and also that $f^S(\epsilon) = f^S(-\epsilon)$.
Define $g^S(z), g^N(z) : \R \rightarrow [0,1]$ by analogy to Equation \ref{eq:cohenftog}. By linearity of expectation, note that $g(z) = g^S(z) + g^N(z) $. Also note that:
\begin{equation}
\begin{split}
     g^S(-z) &= \E_{\epsilon_2,...,\epsilon_n}[f^S([-z,\epsilon_2,...,\epsilon_n]^T)]\\
     &=  \E_{-\epsilon_2,...,-\epsilon_n}[[f^S([-z,-\epsilon_2,...,-\epsilon_n]^T)]\\
     &= \E_{\epsilon_2,...,\epsilon_n}[[f^S([-z,-\epsilon_2,...,-\epsilon_n]^T)]\\
     &= \E_{\epsilon_2,...,\epsilon_n}[[f^S([z,\epsilon_2,...,\epsilon_n]^T)]\\
     &= g^S(z)
\end{split}
\end{equation}
Similarly, define $g^S_\Phi$ and $g^N_\Phi$. We still have:
\begin{equation}
    g_\Phi(y)= g(\sigma\Phi^{-1}(y)) = g^S(\sigma\Phi^{-1}(y)) + g^N(\sigma\Phi^{-1}(y))=  g^S_\Phi(y) + g^N_\Phi(y)
\end{equation}
Also (using $\Phi^{-1}(y) = -\Phi^{-1}(1-y)$): 
\begin{equation}
    g^S_\Phi(y) = g^S(\sigma\Phi^{-1}(y)) =  g^S(-\sigma\Phi^{-1}(1-y)) = g^S(\sigma\Phi^{-1}(1-y)) =  g^S_\Phi(1-y)
\end{equation}
Note that all of the mechanics of the proof of Theorem 1 can be applied to $f, f^S$ and $f^N$. Following Equation \ref{eq:cohenintegrals}, we have:
\begin{equation}
    \begin{split}
        C^S &=\E_\epsilon[f^S(\rvx+\epsilon)] =  \int^{1}_{0} g^S_{\Phi}(y) dy \\
        C^N &=\E_\epsilon[f^N(\rvx+\epsilon)] =  \int^{1}_{0} g^N_{\Phi}(y) dy \\
        C &:=\E_\epsilon[f(\rvx+\epsilon)] =  \int^{1}_{0} g_{\Phi}(y) dy  =  \int^{1}_{0} g^S_{\Phi}(y)  +  g^N_{\Phi}(y) dy = C^S+C^N \\
    \end{split}
\end{equation}
We may then write the minimization in Equation \ref{eq:cohenmin}, fixing $C^N$ and $C^S$ as constants separately\footnote{Note that we are not considering all applicable constraints here: in particular we are not restricting the range of $g_\Phi(z)$ to $[0,1]$ explicitly. However, the lower bound presented here must be at least as low as the lower bound with this additional constraint, so the inequality is still valid. Also, this constraint does in fact hold in the final construction. }:
\begin{equation} 
\begin{split}
    \E_{\epsilon_1}[g(R+\epsilon_1)] &\geq \min_{\substack{g^S_\Phi,g^N_\Phi \in [0,1]\rightarrow [0,1]\\\int^{1}_{0} g^S_{\Phi}(y) dy = C^S\\\int^{1}_{0} g^N_{\Phi}(y) dy = C^N\\}} \int^{1}_{0} g_{\Phi}(y) e^{\Phi^{-1}(y) - \frac{R^2}{2\sigma^2}} dy  \\
    &=\min_{\substack{g^S_\Phi,g^N_\Phi \in [0,1]\rightarrow [0,1]\\\int^{1}_{0} g^S_{\Phi}(y) dy = C^S\\\int^{1}_{0} g^N_{\Phi}(y) dy = C^N\\}} \int^{1}_{0} (g^S_{\Phi}(y) +g^N_{\Phi}(y) ) e^{\Phi^{-1}(y) - \frac{R^2}{2\sigma^2}} dy  \\
    &=\min_{\substack{g^S_\Phi \in [0,1]\rightarrow [0,1]\\\int^{1}_{0} g^S_{\Phi}(y) dy = C^S\\}} \int^{1}_{0} g^S_{\Phi}(y) e^{\Phi^{-1}(y) - \frac{R^2}{2\sigma^2}} dy\\
    &+ \min_{\substack{g^N_\Phi \in [0,1]\rightarrow [0,1]\\\int^{1}_{0} g^N_{\Phi}(y) dy = C^N\\}} \int^{1}_{0}  g^N_{\Phi}(y)  e^{\Phi^{-1}(y) - \frac{R^2}{2\sigma^2}} dy\\
\end{split}
\end{equation}
The second minimum can be computed as in the proof of Theorem 1, it is simply $\Phi\left(\Phi^{-1}(C^N)- \frac{R}{\sigma}\right)$. For the first minimum, we consider the additional constraint, that $g^S_\Phi(y) = g^S_\Phi(1-y)$. Then we can rewrite the integral as:
\begin{equation} \label{eq:gsintegral}
\begin{split}
   & \int^{1}_{0} g^S_{\Phi}(y) e^{\Phi^{-1}(y) - \frac{R^2}{2\sigma^2}} dy\\
   &= \int^{\frac{1}{2}}_{0} g^S_{\Phi}(y) e^{\Phi^{-1}(y) - \frac{R^2}{2\sigma^2}} dy +  \int^{1}_{\frac{1}{2}} g^S_{\Phi}(1-y) e^{\Phi^{-1}(y) - \frac{R^2}{2\sigma^2}} dy\\
   &= \int^{\frac{1}{2}}_{0} g^S_{\Phi}(y) e^{\Phi^{-1}(y) - \frac{R^2}{2\sigma^2}} dy +  \int^{0}_{\frac{1}{2}} g^S_{\Phi}(y') e^{\Phi^{-1}(1-y') - \frac{R^2}{2\sigma^2}} (-1)dy'\\
   &= \int^{\frac{1}{2}}_{0} g^S_{\Phi}(y) e^{\Phi^{-1}(y) - \frac{R^2}{2\sigma^2}} dy +  \int^{\frac{1}{2}}_{0} g^S_{\Phi}(y') e^{-\Phi^{-1}(y') - \frac{R^2}{2\sigma^2}} dy'\\
    &= e^{- \frac{R^2}{2\sigma^2}}\int^{\frac{1}{2}}_{0} g^S_{\Phi}(y)\left[ e^{\Phi^{-1}(y) } +e^{-\Phi^{-1}(y)}\right]  dy \\
    &= 2e^{- \frac{R^2}{2\sigma^2}}\int^{\frac{1}{2}}_{0} g^S_{\Phi}(y)\cosh(\Phi^{-1}(y) )  dy \\
\end{split}
\end{equation}
So the minimization becomes:

\begin{equation}
  \min_{\substack{g^S_\Phi \in [0,\frac{1}{2}]\rightarrow [0,1]\\\int^{\frac{1}{2}}_{0} g^S_{\Phi}(y) dy = \frac{1}{2}C^S\\}} 2e^{- \frac{R^2}{2\sigma^2}}\int^{\frac{1}{2}}_{0} g^S_{\Phi}(y)\cosh(\Phi^{-1}(y) )  dy 
\end{equation}
Note that $\cosh(\Phi^{-1}(y) )$ is a monotonically \textit{decreasing} function of $y$ on the range $[0,\frac{1}{2}]$. Then the minimum is achieved by the function:
\begin{equation}
g^{S*}_\Phi(y) = \left\{
	\begin{array}{ll}
		0  & \mbox{if } y < \frac{1-C^S}{2}\\
		1 & \mbox{if }  \frac{1-C^S}{2} \leq y  \leq \frac{1}{2}
	\end{array}
\right.
\end{equation}
Where the value in the domain $[\frac{1}{2},1]$ can be computed using 
$g^{S*}_\Phi(1-y) = g^{S*}_\Phi(y)$
In terms of the function $g^S(z)$, this is:
\begin{equation}
g^{S*}(z) = \left\{
	\begin{array}{ll}
		1  & \mbox{if } |z| \leq \sigma \Phi^{-1}(\frac{1+C^S}{2}) \\
		0 & \mbox{if } |z| >\sigma \Phi^{-1}(\frac{1+C^S}{2})
	\end{array}
\right.
\end{equation}
We can now evaluate the integral, again using the form of the integral given in Equation \ref{eq:cohenintegrals}:
\begin{equation}
\begin{split}
    \E_{\epsilon_1}[g^S(R  + \epsilon_1)]& \geq   \int^{\infty}_{-\infty} g^{S*}(\epsilon_1) \sigma^{-1} \Phi'\left(\frac{\epsilon_1}{\sigma}- \frac{R}{\sigma}\right) d\epsilon_1\\
    &= \int^{\sigma \Phi^{-1}(\frac{1+C^S}{2})}_{-\sigma \Phi^{-1}(\frac{1+C^S}{2})}  \sigma^{-1} \Phi'\left(\frac{\epsilon_1}{\sigma}- \frac{R}{\sigma}\right) d\epsilon_1\\
    &= \Phi\left(\Phi^{-1}(\frac{1+C^S}{2})- \frac{R}{\sigma}\right) - \Phi\left(-\Phi^{-1}(\frac{1+C^S}{2})- \frac{R}{\sigma}\right)\\
    &= \Phi\left(\Phi^{-1}(\frac{1+C^S}{2})- \frac{R}{\sigma}\right) - \Phi\left(\Phi^{-1}(\frac{1-C^S}{2})- \frac{R}{\sigma}\right)\\
\end{split}
\end{equation}
So, combining the $g^S$ and $g^N$ terms, we have:
\begin{equation}
\begin{split}
      \E_{\epsilon_1}[g(R  + \epsilon_1)] &\geq
     \Phi\left(\Phi^{-1}(C^N)- \frac{R}{\sigma}\right)\\ &+
     \Phi\left(\Phi^{-1}(\frac{1+C^S}{2})- \frac{R}{\sigma}\right) \\ &- \Phi\left(\Phi^{-1}(\frac{1-C^S}{2})- \frac{R}{\sigma}\right)   
\end{split}
\end{equation}
From Equation \ref{eq:cohenftog}, and noting that the last two terms together are monotonically decreasing\footnote{ To see this, note the form of the integral of $g_S$ equal to these terms given in Equation \ref{eq:gsintegral}}  with $R$, we complete the proof.
\end{proof}

\section{Additional Experiments}\label{sec:appendix_figures}
Here, we present experiments at a wider range of parameters. For all figures, images misclassified or not certified for both the baseline and the tested method are not counted: the total test set size is 1000 for MNIST, 500 for CIFAR and ImageNet with $N=10^5$, and 100 for CIFAR and ImageNet with $N=10^6$. For all experiments, $N_0 = 100$. Also, note that we test independently for the baseline and higher-order methods (i.e., we use different smoothing samples). This is necessary to compare fairly to dipole smoothing, where the sampling method is different; however, it does lead to some noise, especially at $N=10^5$. 
\subsection{Noise level $\sigma$.}
We see (Figures \ref{fig:sigma0127by7} and \ref{fig:sigma0507by7}) that at a smaller level of noise ($\sigma = 0.12$), the effect of higher-order smoothing is diminished. This can be understood in terms of the curves in Figure \ref{fig:intro_fig}-b: lower noise leads to more inputs with higher $p_a$, which reduces the benefit of the higher-order certificate. Conversely, higher noise increases the effects of the higher-order certificates, although it also leads to decreased total accuracy. The dipole certificate underperforms at $N=10^5, \sigma = 0.5$: this is likely due to the increase in estimation error, which becomes significant near $p_a = 0.5$.
\subsection{Dimensionality $d$}
To test second-order smoothing on a lower-dimensional dataset, we performed PCA on the $7\times7$ MNIST images, and classified using the top 10 principal components. ($d=10$). Results are shown in Figures \ref{fig:sigma01210pc}, \ref{fig:sigma02510pc}, and \ref{fig:sigma05010pc}. We see that, at $N=10^6$, second-order smoothing has a marginal positive impact at this smaller scale.
\subsection{Dipole Smoothing on CIFAR-10}
In Figure \ref{fig:cifar}, we see experiments on CIFAR-10 using dipole smoothing, for a range of $\sigma\in\{0.12,0.25,0.50,1.00\}$ and $N\in\{10^5,10^6\}$. Note that dipole smoothing appears to be beneficial even at $N=10^5$ on CIFAR-10, at all noise levels $\geq$ 0.25.
\subsection{Dipole Smoothing on ImageNet}
In Figure \ref{fig:imagenet}, we see experiments on ImageNet using dipole smoothing, for a range of $\sigma\in\{0.25,0.50,1.00\}$ and $N\in\{10^5,10^6\}$. There is an anomalous result for $\sigma=0.50$, $N=10^5$, in that this is the only case where dipole smoothing appears to perform worse than standard smoothing. However, this turns out to be  a computational artifact. At both $\sigma = 0.50$ and $\sigma = 0.25$, there are a large number of images where every smoothing sample is correctly classified, so $p_a$ is as close to 1 as the measurement bounds allow. Note that if $p_a$ truly equals 1, the certified radius is infinite, so in this domain, the reported certificate is entirely a function of the estimation error.  Because dipole smoothing reduces measurement precision, these samples have somewhat smaller certified radii under dipole smoothing, especially at small $N$. However, this gap should be exactly proportional to $\sigma$. The cause of the anomaly is the fact that our code (adapted from \citep{pmlr-v97-cohen19c}) records each radius to three significant figures. At $\sigma = 0.5$, for an image where all noise samples are correctly classified, the ratio of the dipole smoothing radius to the standard smoothing radius is reported as $1.89/1.91 = 98.95\%$, while for $\sigma = 0.25$ it is reported as $0.947/0.953 = 99.37\%$. This explains the large number of samples with reported $>1\%$ decrease in certificates for $\sigma=0.50,N=10^5$.
\begin{figure} [p]
    \centering
    \includegraphics[width=\textwidth]{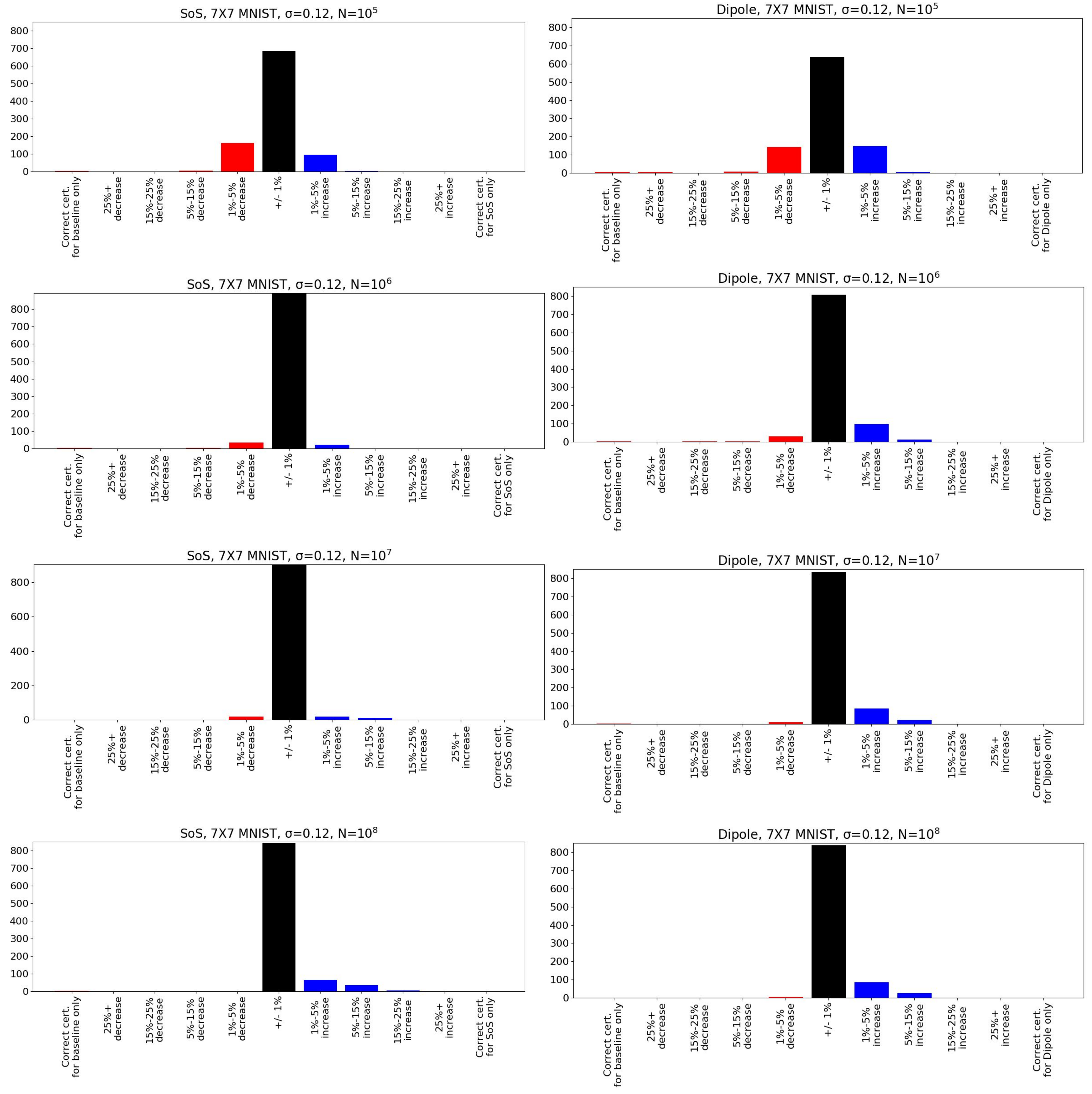}
    \caption{$7 \times 7$ MNIST, $\sigma = 0.12$}
    \label{fig:sigma0127by7}
\end{figure}
\begin{figure} [p]
    \centering
    \includegraphics[width=\textwidth]{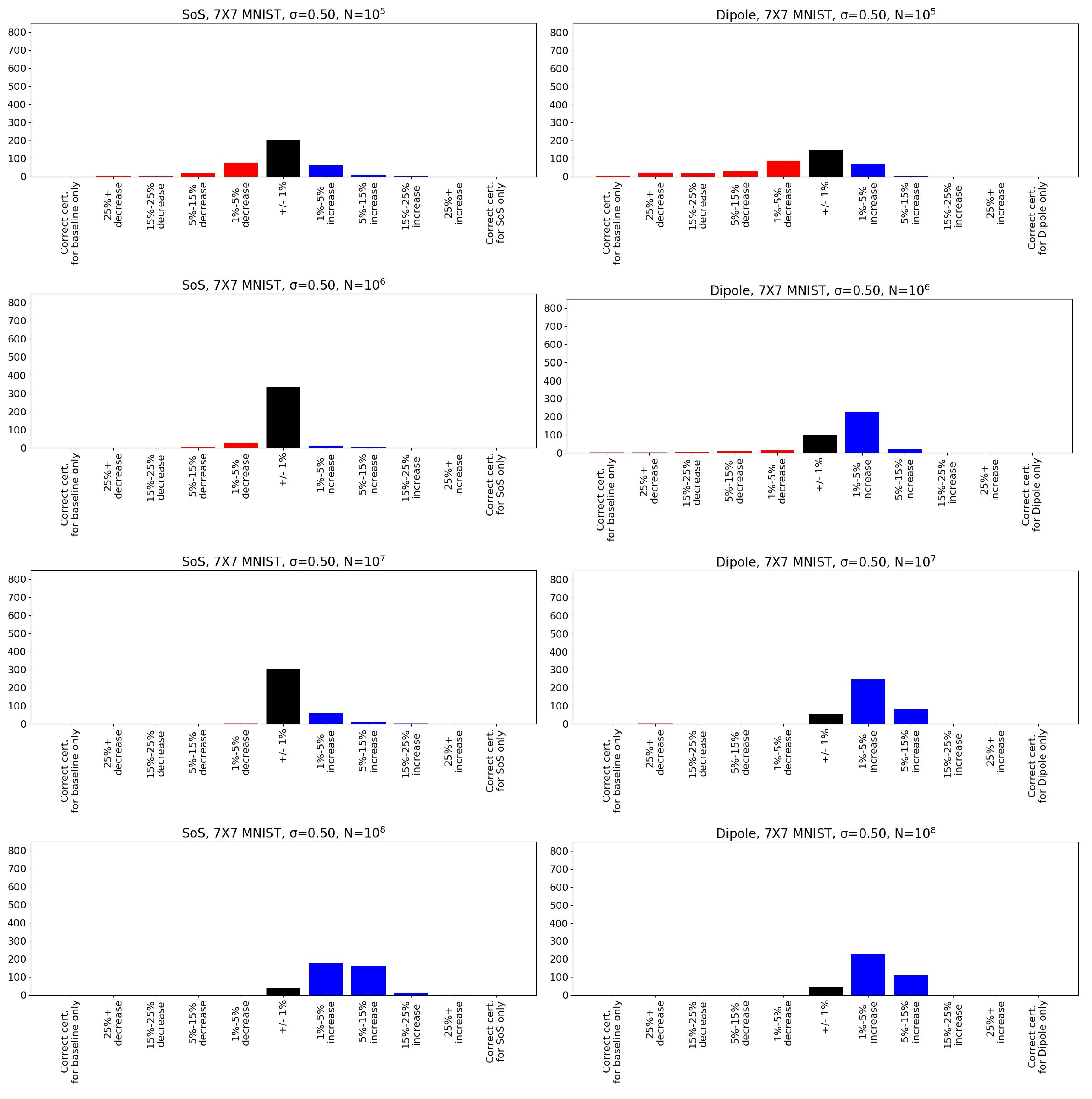}
    \caption{$7 \times 7$ MNIST, $\sigma = 0.50$}
    \label{fig:sigma0507by7}
\end{figure}
\begin{figure}[p]
    \centering
    \includegraphics[width=\textwidth]{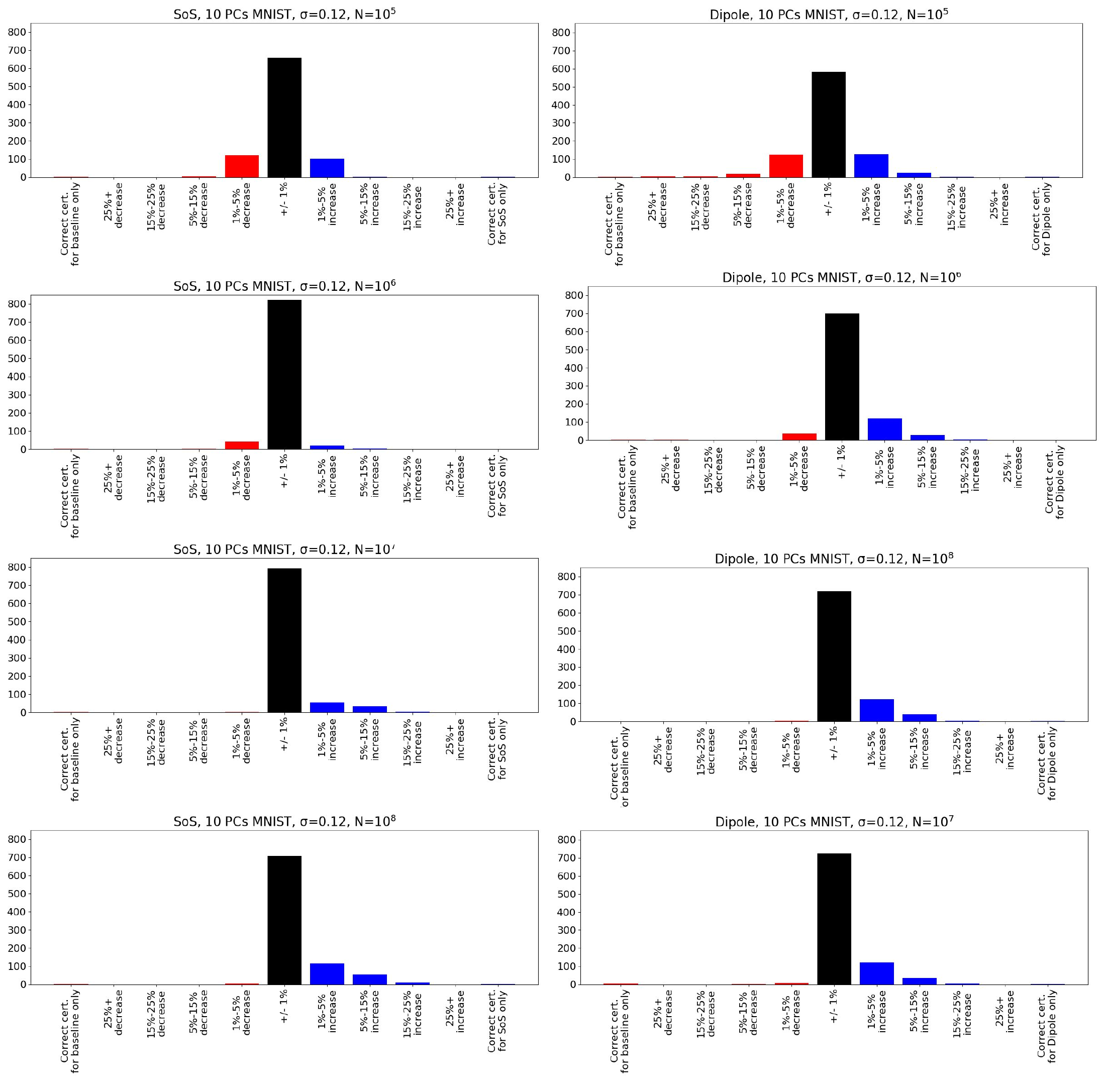}
    \caption{10 PC MNIST, $\sigma = 0.12$}
    \label{fig:sigma01210pc}
\end{figure}
\begin{figure}[p]
    \centering
    \includegraphics[width=\textwidth]{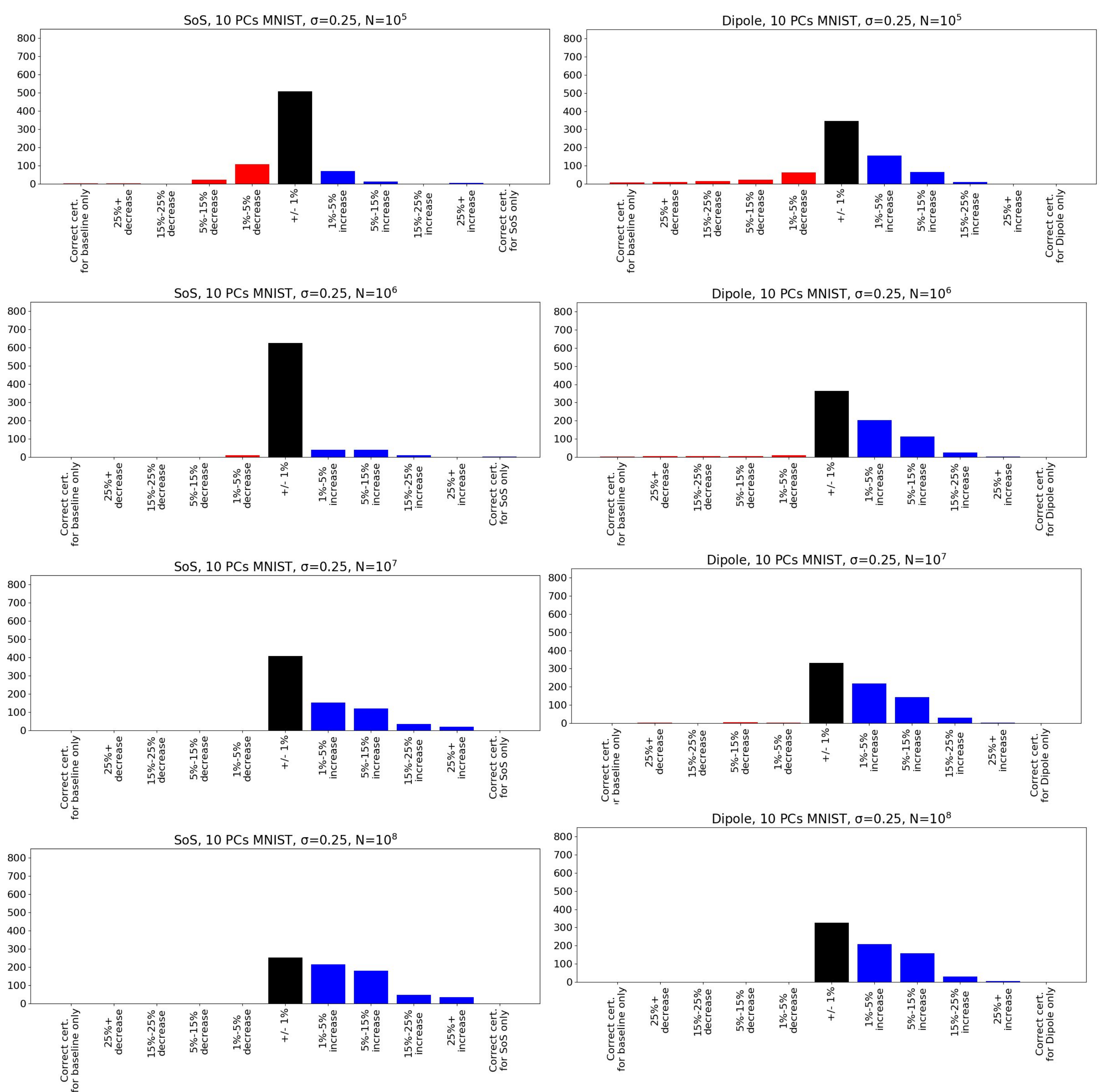}
    \caption{10 PC MNIST, $\sigma = 0.25$}
    \label{fig:sigma02510pc}
\end{figure}
\begin{figure}[p]
    \centering
    \includegraphics[width=\textwidth]{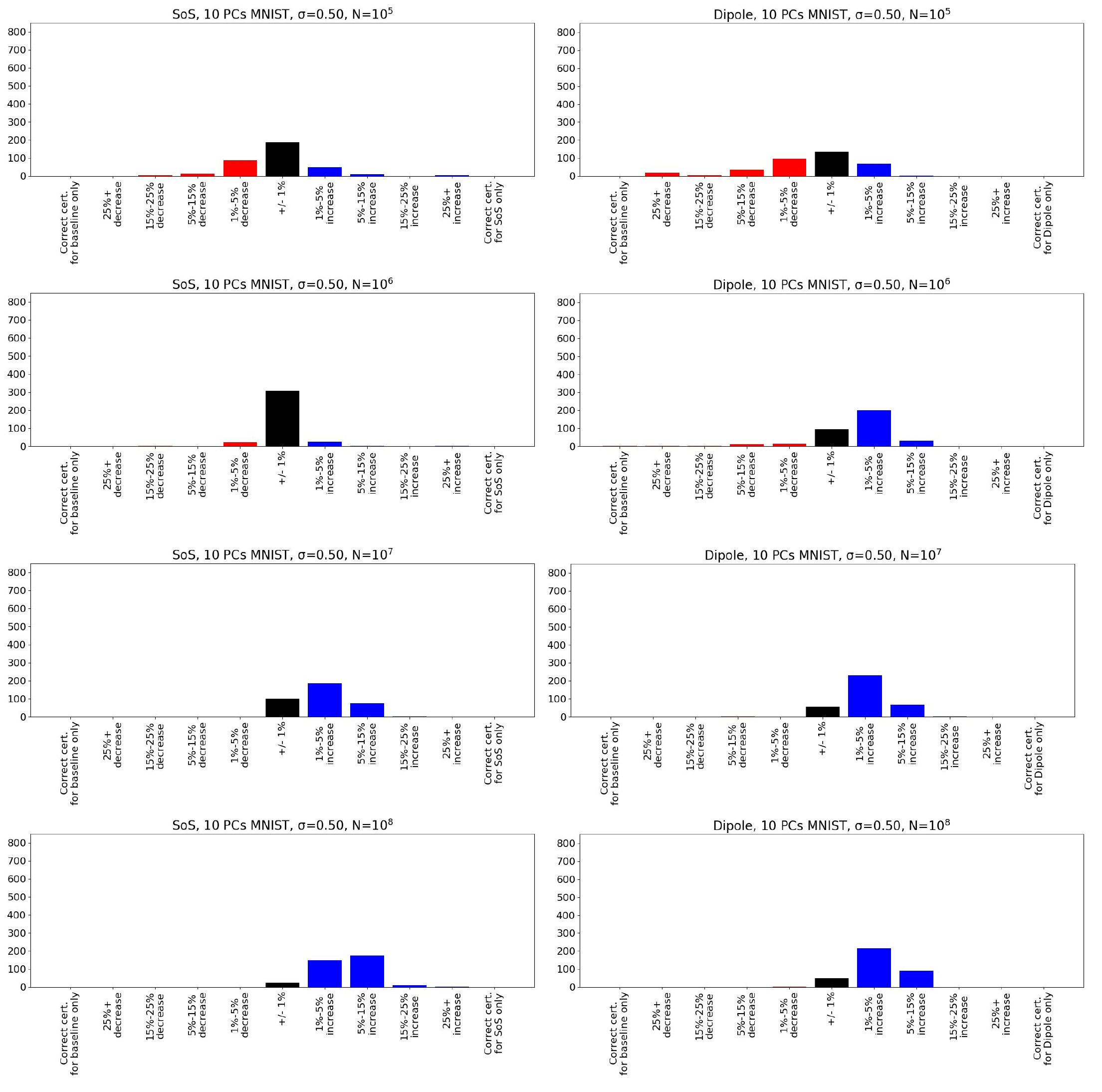}
    \caption{10 PC MNIST, $\sigma = 0.50$}
    \label{fig:sigma05010pc}
\end{figure}
\begin{figure}[p]
    \centering
    \includegraphics[width=\textwidth]{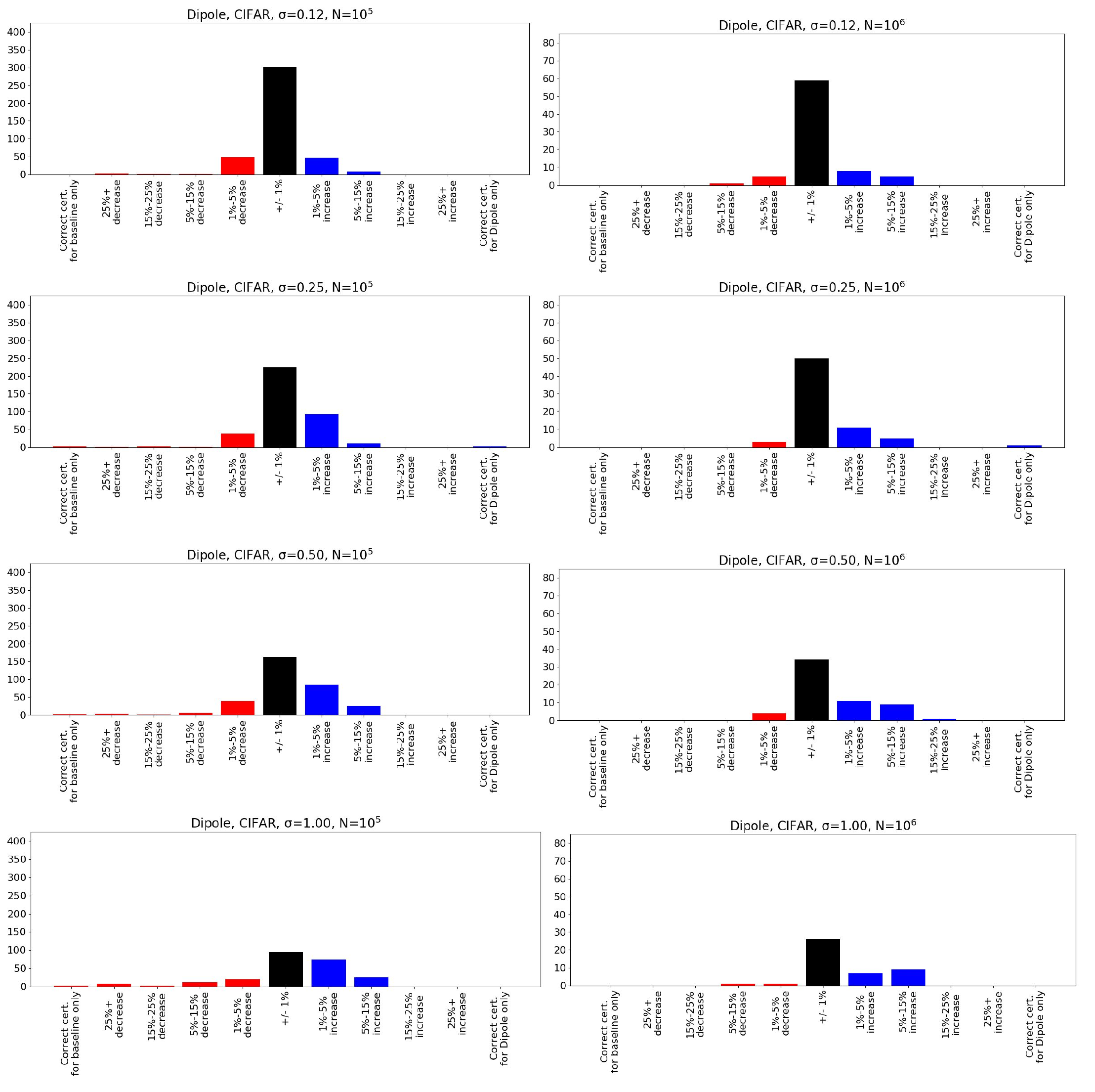}
    \caption{CIFAR-10}
    \label{fig:cifar}
\end{figure}
\begin{figure}[p]
    \centering
    \includegraphics[width=\textwidth]{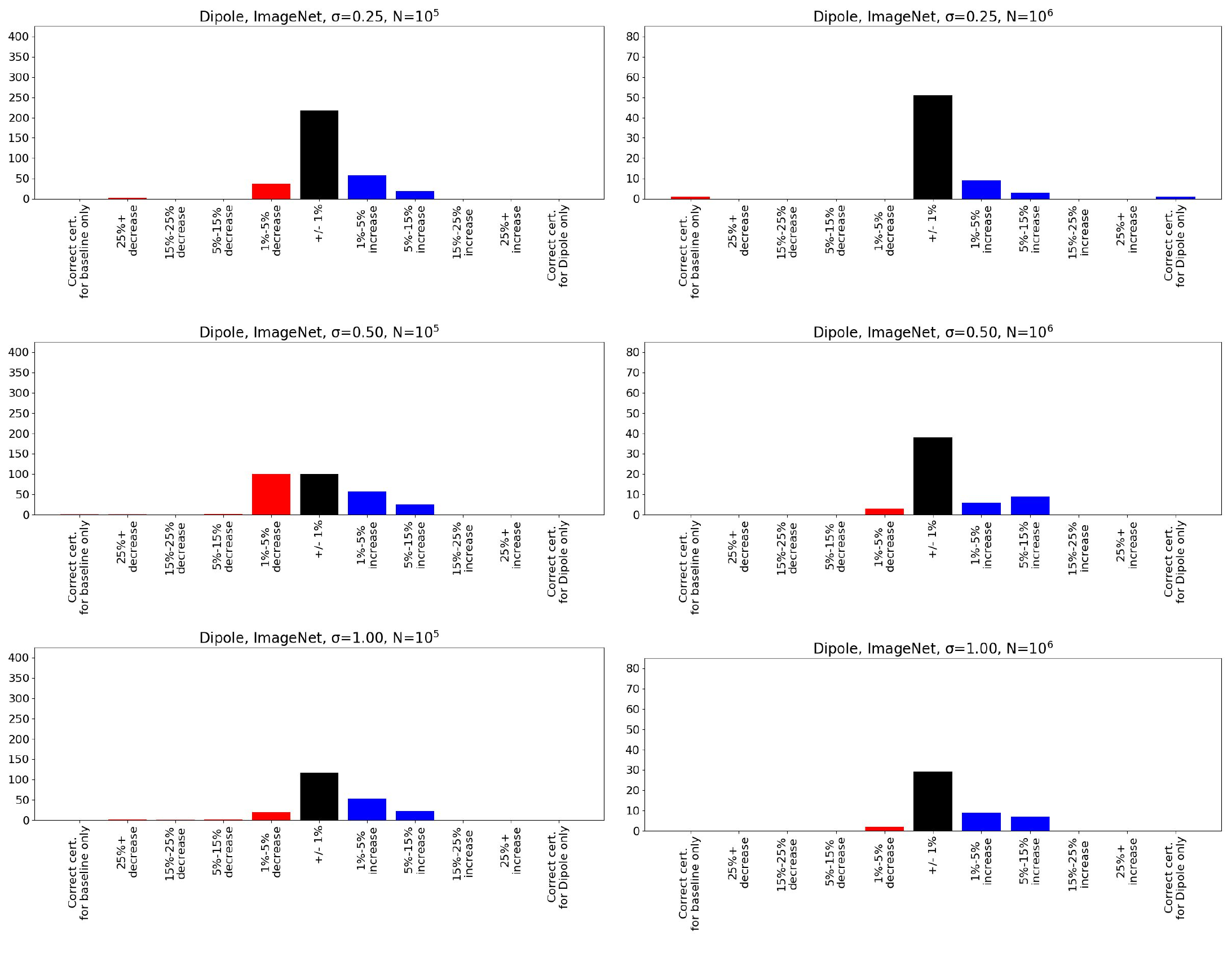}
    \caption{ImageNet}
    \label{fig:imagenet}
\end{figure}
\section{Absolute Certificates for main-text experiments}  \label{sec:curves}
In Figures \ref{fig:expsl-a-curves} and \ref{fig:expsl-b-curves}, we show the absolute, rather than relative, values of the certificates reported in the main text, compared to the baseline first-order randomized smoothing. We see that the benefit of the proposed techniques is greatest for images with small absolute certificates, and that, on CIFAR-10 and ImageNet, there is some disadvantage to dipole smoothing on the largest possible certificates, where all smoothing samples are classified correctly. This is because, for these images, the certificate depends entirely on estimation error.
\begin{figure} [p]
    \centering
    \includegraphics[width=\textwidth]{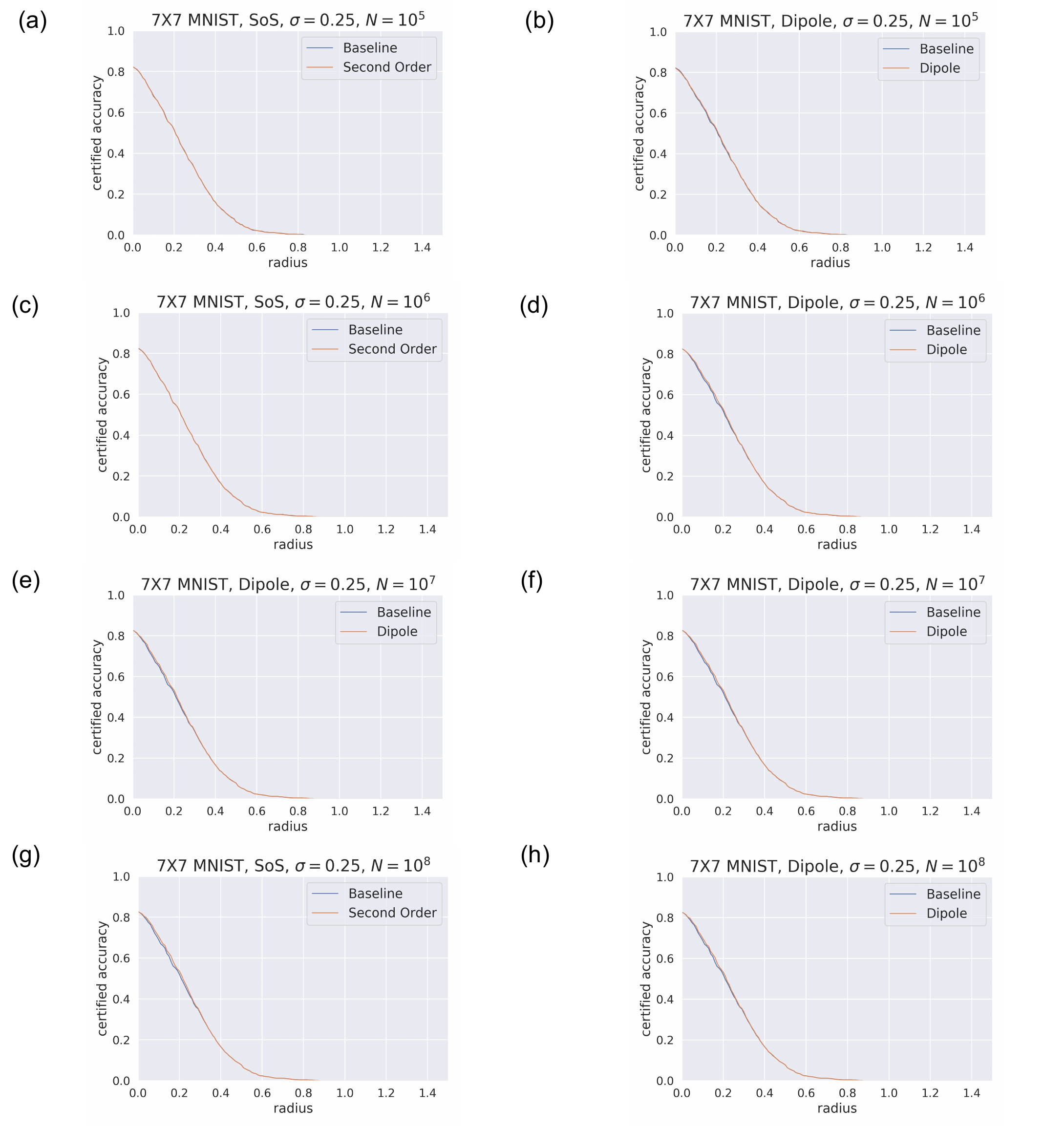}
    \caption{ Experiments on $7\times7$ MNIST. For all, $\sigma = 0.25$. For (a, c, e, g), Second-order Smoothing is used. For (b, d, f, h), Gaussian dipole smoothing is used. For (a, b), $N = 10^5$. For (c, d), $N = 10^6$. For (e, f), $N = 10^7$. For (g, h), $N = 10^8$. }
    \label{fig:expsl-a-curves}
\end{figure}
\begin{figure} [t]
    \centering
    \includegraphics[width=\textwidth]{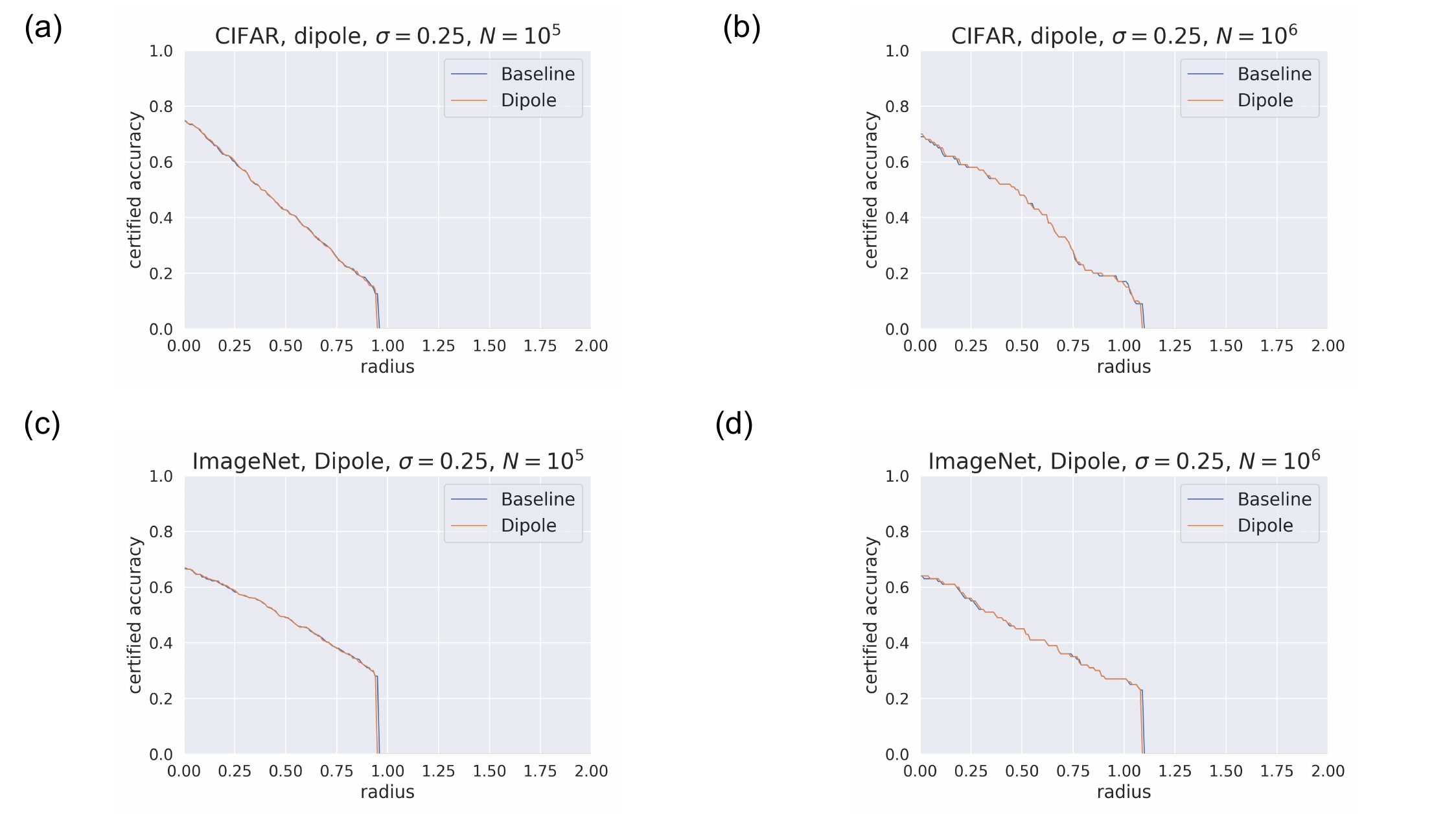}
    \caption{ Dipole smoothing experiments, with $\sigma=0.25$, on CIFAR-10 (a, b) and ImageNet (c, d). For (a, c), $N = 10^5$. For (b, d), $N = 10^6$. }
    \label{fig:expsl-b-curves}
\end{figure}

\end{document}